\newcommand{\cmark}{\textcolor{green}{\ding{51}}} 
\newcommand{\xmark}{\textcolor{red}{\ding{55}}}   
\theoremstyle{plain}
\newtheorem{theorem}{Theorem}
\newtheorem{lemma}{Lemma}
\theoremstyle{definition}
\newtheorem{definition}{Definition}
\newtheorem{assumption}{Assumption}
\theoremstyle{remark}
\colorlet{lightSalmon}{Salmon!80}
\newcommand{\colorize}[2]{%
  \ifnum#1>50
    \edef\colorval{\the\numexpr (#1-50)*3/5+10\relax}%
    \colorbox{red!\colorval}{\strut #2}%
  \else
    \edef\colorval{\the\numexpr (50-#1)*3/5+10\relax}%
    \colorbox{blue!\colorval}{\strut #2}%
  \fi
}
\definecolor{langdark}{rgb}{0, 0, 0}
\definecolor{langlightblue}{rgb}{0.3, 0.65, 1}
\definecolor{langblue}{rgb}{0, 0.4, 0.8}
\definecolor{langwildblue}{rgb}{0.0, 0.45, 0.73}
\definecolor{langdarkblue}{rgb}{0.0, 0.0, 0.61}
\definecolor{langred}{rgb}{0.81, 0.09, 0.13}
\definecolor{langgreen}{rgb}{0.0, 0.6, 0.3}
\definecolor{binglightpink}{rgb}{1.0, 0.6, 0.71}
\definecolor{bingpink}{rgb}{1.0, 0.35, 0.71}
\definecolor{bingdarkpink}{rgb}{1.0, 0.25, 0.71}
\definecolor{bingdarkdarkpink}{rgb}{1.0, 0, 0.71}
  \renewcommand{\ref}[1]{%
    \hyperref[{#1}]{\underline{\oldref{#1}}}%
  }%
\newcommand\DoToC{%
  \twocolumn[
    \startcontents
    \printcontents{}{1}{%
      \textbf{\large Contents of Appendix}\vskip3pt\hrule\vskip5pt
    }
    \vskip3pt\hrule\vskip5pt
  ]
}
\title{Not All Tokens Matter: Towards Efficient LLM Reasoning via Token Significance in Reinforcement Learning}
\author{
Hanbing Liu$^1$\thanks{Work done during internship at Microsoft.}\thanks{Affiliated with the Shenzhen Key Laboratory of Ubiquitous Data Enabling, Tsinghua Shenzhen International Graduate School, Tsinghua University.} \quad
Lang Cao$^2$\footnotemark[1] \quad
Yuanyi Ren$^3$\footnotemark[1] \quad
Mengyu Zhou$^4$\thanks{Corresponding author. zhoumengyu.zmy@alibaba-inc.com.} \quad Haoyu Dong$^5$  \\
\textbf{
Xiaojun Ma$^5$ \quad
Shi Han$^5$ \quad
Dongmei Zhang$^5$} \\
$^1$Tsinghua University \quad
$^2$University of Illinois Urbana-Champaign \quad 
$^3$Peking University \\
$^4$Qwen Large Model Application Team, Alibaba \quad
$^5$Microsoft \\
\texttt{liuhb24@mails.tsinghua.edu.cn, langcao2@illinois.edu, yyren@pku.edu.cn} \\
}
\begin{document}
\maketitle
\begin{abstract}
Large language models (LLMs) show strong reasoning abilities but often produce unnecessarily long explanations that reduce efficiency. Although reinforcement learning (RL) has been used to improve reasoning, most methods focus on accuracy and rely on uniform length-based rewards that overlook the differing contributions of individual tokens, often harming correctness. We revisit length optimization in RL through the perspective of \textit{token significance}. Observing that many chain-of-thought (CoT) tokens contribute little to the final answer, we introduce a significance-aware length reward that selectively penalizes insignificance tokens, reducing redundancy while preserving essential reasoning. We also propose a dynamic length reward that encourages more detailed reasoning early in training and gradually shifts toward conciseness as learning progresses. Integrating these components into standard policy optimization yields a framework that improves both reasoning efficiency and accuracy. Experiments across multiple benchmarks demonstrate substantial reductions in response length while preserving or improving correctness, highlighting the importance of modeling token significance for efficient LLM reasoning. Code is available at \url{https://github.com/microsoft/Bingo}.
\end{abstract}


\begin{figure}[t]
    \centering
    \includegraphics[width=\linewidth]{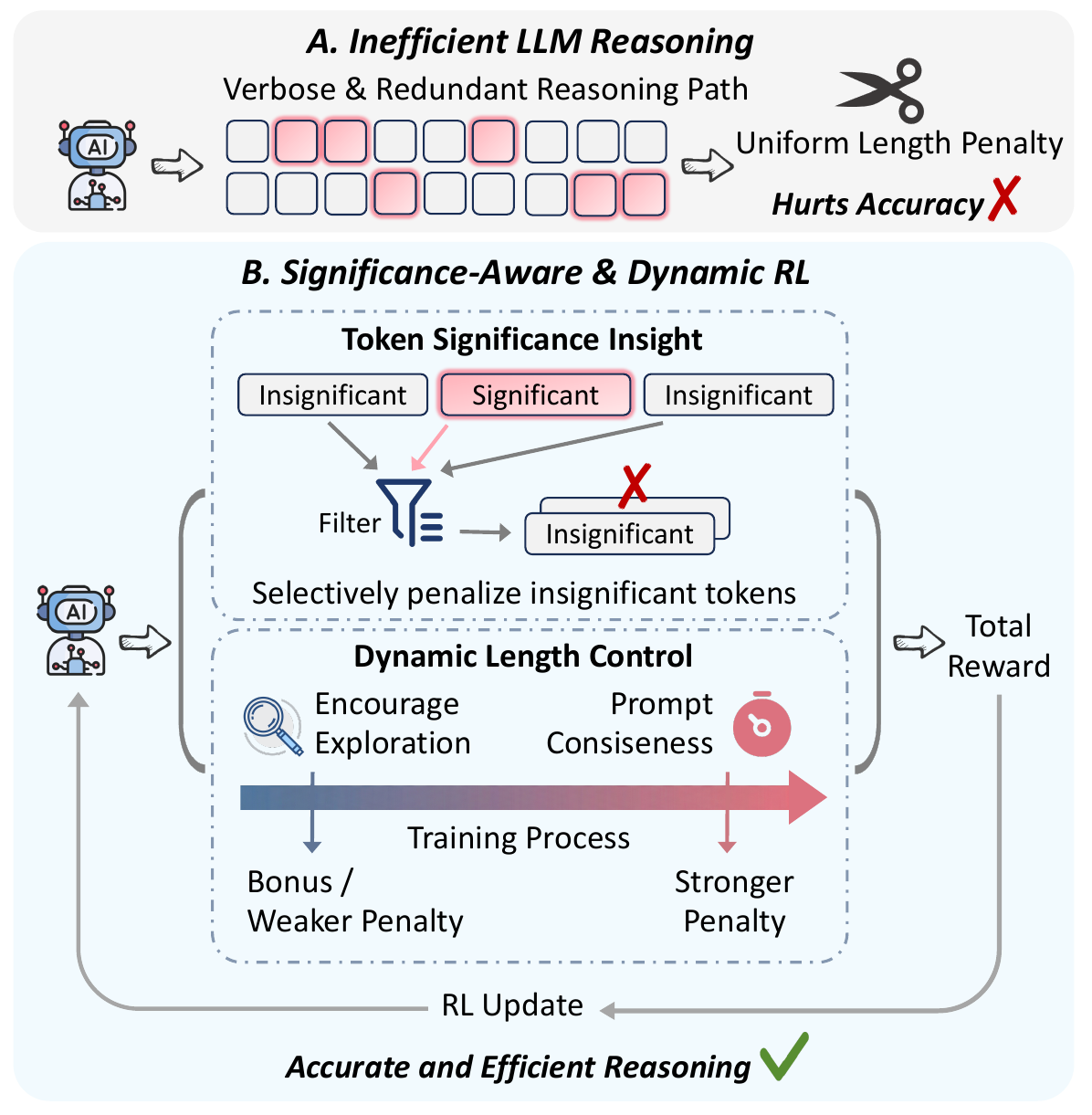}
    \caption{
    \textbf{(A) Limitations of Uniform Length Penalization.}
    LLMs often produce verbose and redundant reasoning, and applying a uniform length penalty, as done in many prior RL approaches, fails to account for the differing importance of individual tokens, which can lead to accuracy degradation;
    \textbf{(B) Reinforcement Learning with Token Significance and Dynamic Length Control.}
    Our method models token significance to selectively penalize unimportant tokens and introduces dynamic length control to balance exploration and conciseness throughout training, enabling LLMs to generate reasoning that is both accurate and efficient.
    }
    \label{fig:teaser}
    \vspace{-1.1em}

\end{figure}

\section{Introduction}

Large language models (LLMs) \citep{openai2024gpt4technicalreport,gunasekar2023textbooksneed} have demonstrated impressive reasoning capabilities across a variety of tasks, from arithmetic problem solving \citep{uesato2022solvingmathwordproblems,hendrycksmath2021,aime_1983_2024} to commonsense reasoning \citep{chen-etal-2023-theoremqa}. A key observation from recent work is that sufficiently large models can exhibit emergent reasoning abilities, such as chain-of-thought (CoT) reasoning \citep{wei2022chain}, without explicit supervision~\citep{wei2022emergentabilitieslargelanguage, suzgun2022challengingbigbenchtaskschainofthought}. Despite these successes, a major challenge persists: LLMs often generate unnecessarily verbose or redundant reasoning traces, leading to inefficiencies in computational cost, redundancy, and latency.

Improving reasoning efficiency of LLMs has thus emerged as an important research direction~\citep{qu2025surveyefficientreasoninglarge, sui2025stopoverthinkingsurveyefficient, li202512surveyreasoning, wang2025harnessingreasoningeconomysurvey}. Prior work in this area can be broadly categorized into supervised fine-tuning (SFT) approaches \citep{xia2025tokenskipcontrollablechainofthoughtcompression, xu2025twtthinkingtokenshabitual, zhang2025lightthinkerthinkingstepbystepcompression, kang2024c3otgeneratingshorterchainofthought} and reinforcement learning (RL) approaches \citep{luo2025o1prunerlengthharmonizingfinetuningo1like, kimiteam2025kimik15scalingreinforcement, arora2025traininglanguagemodels, aggarwal2025l1controllinglongreasoning, yeo2025demystifyinglongchainofthoughtreasoning, shi2025dastdifficultyadaptiveslowthinking}. SFT-based methods focus on constructing compressed reasoning traces and training models to imitate them. While these approaches can be effective, they rely on high-quality compressed supervision, which is costly to obtain and often lacks generalizability across diverse tasks. RL-based methods typically introduce length-based rewards that penalize overly long responses to encourage brevity. However, the design of such rewards or penalties in RL-based methods remains underexplored and is often overly simplistic. For example, O1-Pruner~\citep{luo2025o1prunerlengthharmonizingfinetuningo1like} applies a uniform penalty to all samples, assuming that every response should be shortened. This assumption often leads to performance degradation, as not all reasoning traces are equally verbose—some require more detailed steps to arrive at the correct answer. To address this, other works have proposed more selective penalty strategies, conditioning penalties on sample correctness~\citep{qu2025surveyefficientreasoninglarge,kimiteam2025kimik15scalingreinforcement,yeo2025demystifyinglongchainofthoughtreasoning} or estimated difficulty~\citep{shi2025dastdifficultyadaptiveslowthinking}. These approaches typically assign stronger penalties to simpler questions and weaker ones to more challenging cases. However, accurately estimating question difficulty remains a fundamental challenge, and unresolved hard questions often lead to unnecessarily long responses, further undermining reasoning efficiency.

Despite growing interest, current designs of length-based rewards remain limited, as they often fail to adequately promote concise reasoning while preserving answer accuracy. For example, prior work has largely overlooked the impact of token-level contributions on the overall efficiency of reasoning as illustrated in Figure~\ref{fig:teaser} (A). In this work, we approach the problem from a novel perspective grounded in the concept of \textit{\textbf{token significance}}. Our motivation arises from observed token redundancy in LLMs~\citep{hou-etal-2022-token,lin2025rho1tokensneed}, where many tokens in chain-of-thought (CoT) reasoning contribute little to the final answer. We posit that \textit{not all tokens are equally important for efficient reasoning}—many are insignificant, such as redundant phrases or unnecessary intermediate steps, and can be removed without degrading performance. Existing reward designs often overlook this distinction. In contrast, we introduce a \textbf{\textit{significance-aware length reward}} that selectively penalizes only those insignificant tokens which do not meaningfully contribute to the final answer, while preserving essential reasoning steps.

We also observe that effectively handling hard questions is essential for efficient reasoning. Prior work~\citep{muennighoff2025s1simpletesttimescaling,wu2025lessunderstandingchainofthoughtlength} has shown that encouraging extended CoT reasoning can improve performance by enabling deeper exploration, which may help solve more difficult questions. Therefore, it is intuitive to use length as an incentive for hard questions. However, LLMs should solve difficult questions not only accurately but also concisely. Applying a static length incentive can lead to unnecessarily long responses, which may still fail to produce correct answers. To address this, we incorporate a \textbf{\textit{dynamic length reward}} that adapts over the course of training. This reward is applied to significant tokens in incorrect samples to balance exploration and efficiency. Specifically, it encourages longer reasoning in the early training phase to promote exploration, and gradually shifts toward penalizing excessive length in later stages to promote conciseness.

Building on these insights, we introduce \textbf{\textsc{Bingo}} (\textbf{B}oosting Efficient Reason\textbf{ING} in Policy \textbf{O}ptimization), an RL framework that incorporates our two proposed reward mechanisms into standard RL algorithms such as Proximal Policy Optimization (PPO)~\citep{Schulman2017ProximalPO} as illustrated in Figure~\ref{fig:teaser} (B). This enables joint optimization of both reasoning accuracy and efficiency. Extensive experiments across diverse reasoning benchmarks show that \textsc{Bingo} consistently outperforms strong baselines by reducing redundant computation while maintaining or improving accuracy (Appendix~\ref{ap:baseline}).

In summary, this paper makes the following key contributions:
\begin{itemize}[leftmargin=*, itemsep=0pt, labelsep=5pt, topsep=0pt]
    \item \textbf{Token Significance Insight.}
        We introduce the concept of \textit{token significance} in policy optimization, distinguishing between \textit{significant} and \textit{insignificant} tokens in reasoning traces. This insight motivates our \textit{significance-aware length reward}, which explicitly penalizes uninformative tokens while preserving critical reasoning content, enabling more targeted and effective length control.
    \item \textbf{Dynamic Length Control.}
        We propose a \textit{dynamic length reward} strategy that adjusts the reward signal over the course of training—encouraging longer reasoning in the early stages to foster exploration, and gradually promoting conciseness as the model converges.
    \item \textbf{Efficiency-Oriented RL Framework.}
        We develop \textbf{\textsc{Bingo}}, a new reinforcement learning framework that integrates both reward strategies. Extensive experiments across multiple reasoning benchmarks, along with comprehensive analyses, demonstrate its effectiveness.
\end{itemize}

\section{Related Work}

\noindent \textbf{Reinforcement Learning for Large Language Models.}
Reinforcement Learning (RL) \citep{Kaelbling1996ReinforcementLA} has emerged as a powerful paradigm for aligning large language models (LLMs) with human preferences. In Reinforcement Learning from Human Feedback (RLHF) \citep{Christiano2017DeepRL, Stiennon2020LearningTS, Ouyang2022TrainingLM}, the Proximal Policy Optimization (PPO) algorithm \citep{Schulman2017ProximalPO} is employed alongside human preference data to train a reward model that steers the fine-tuning of LLMs. Building on PPO, subsequent works like GRPO \citep{shao2024deepseekmathpushinglimitsmathematical} and REINFORCE++ \citep{Hu2025REINFORCEAS} have proposed improved variants to address its limitations. 
Beyond alignment, RL has also shown promise in improving the reasoning capabilities of LLMs. Early studies \citep{lightman2023letsverifystepstep, uesato2022solvingmathwordproblems} demonstrated that reward-guided training can enhance multi-step reasoning performance. More recently, DeepSeek-R1 \citep{deepseekai2025deepseekr1incentivizingreasoningcapability} demonstrated that large-scale RL can substantially boost reasoning ability across a wide range of tasks, pointing to a promising direction for future work. \\

\noindent \textbf{Efficient Reasoning with Large Language Models.}
Recent advances have empowered language models to perform strong reasoning via inference-time techniques such as chain‑of‑thought prompting \citep{wei2023chainofthoughtpromptingelicitsreasoning, yao2023treethoughtsdeliberateproblem, cao-2024-graphreason, wang2023selfconsistencyimproveschainthought} and post-training \citep{lightman2023letsverifystepstep, uesato2022solvingmathwordproblems, deepseekai2025deepseekr1incentivizingreasoningcapability,jiang2025deepretrievalhackingrealsearch, cao2025fortune}. More recent work has shifted to optimizing both accuracy and efficiency. Some approaches improve efficiency at inference time, such as token‑budget‑aware reasoning \citep{han2025tokenbudgetawarellmreasoning}, or prompting strategies like “reason‑without‑thinking” \citep{ma2025reasoningmodelseffectivethinking} and chain‑of‑draft \citep{xu2025chaindraftthinkingfaster}. Others apply post‑training optimization via supervised fine‑tuning (SFT), including TokenSkip \citep{xia2025tokenskipcontrollablechainofthoughtcompression}, TwT \citep{xu2025twtthinkingtokenshabitual}, LightThinker \citep{zhang2025lightthinkerthinkingstepbystepcompression}, and C3oT \citep{kang2024c3otgeneratingshorterchainofthought}. These SFT methods primarily construct high‑quality compressed reasoning paths containing key information, and train the models on them.
In parallel, RL-based approaches often improve efficiency by incorporating length controls or penalties into their reward functions \citep{luo2025o1prunerlengthharmonizingfinetuningo1like, kimiteam2025kimik15scalingreinforcement, arora2025traininglanguagemodels, aggarwal2025l1controllinglongreasoning, yeo2025demystifyinglongchainofthoughtreasoning, shi2025dastdifficultyadaptiveslowthinking}. For instance, O1-Pruner \citep{luo2025o1prunerlengthharmonizingfinetuningo1like} uses offline length rewards comparing samples against mean lengths. Kimi k1.5 \citep{kimiteam2025kimik15scalingreinforcement} applies online penalties to correct samples only. 
Building on prior RL-based approaches, we advance length-based reward design to enable LLMs to balance reasoning accuracy with computational efficiency.

\begin{figure*}[ht!]
    \centering
    \includegraphics[width=\textwidth]{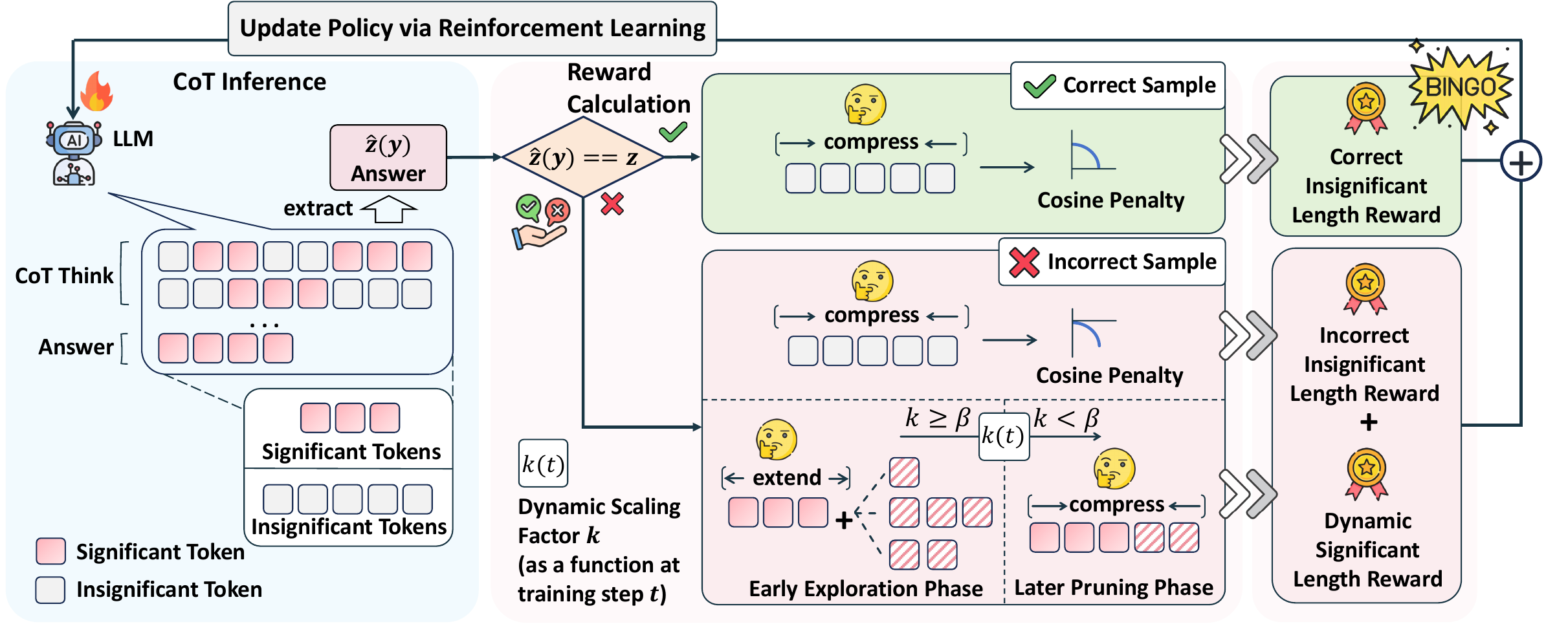}
    \caption{\textbf{Illustration of the \textsc{Bingo} framework.} Given a generated CoT trace, the LLM first distinguishes between \textit{significant} and \textit{insignificant} tokens. A dynamic length reward is then computed based on token type and sample correctness. During the early exploration phase of training (\(k(t) \geq \beta\)), the reward encourages extended reasoning for significant tokens in incorrect samples while penalizing insignificant tokens in all cases. As training progresses (\(k(t) < \beta\)), the reward shifts toward promoting conciseness by discouraging both significant and insignificant length where appropriate. This two-stage strategy allows the model to first explore broadly and then compress effectively. The aggregated rewards are then used to update the policy via RL, resulting in more accurate and efficient reasoning.}
    \label{fig:bingo}
    \vspace{-1em}
\end{figure*}

\section{Methodology}
\label{sec:method}
In this section, we introduce the design of the significance-aware length reward and the dynamic length reward, and explain how these two reward mechanisms are integrated into the \textsc{Bingo} framework, as illustrated in Figure~\ref{fig:bingo}. All notations are list at Appendix~\ref{ap:notation}.

\subsection{Task Formulation}
\label{sec:task}
\noindent \textbf{Chain of Thought Reasoning.} Let $x$ denote a prompt, and let $y = (y_1, y_2, \dots, y_n)$ represent the sequence generated by a language model parameterized by $\theta$, where $y_i$ is the $i$-th token in the sequence, and $n$ is the total length of the sequence. Tokens are generated autoregressively from the conditional distribution:
\begin{equation}
\pi_\theta(y \mid x) = \prod_{t=1}^{n} \pi_\theta  \bigl(y_i \mid x,\,y_{1:i-1}\bigr),
\end{equation}
where the product runs over all tokens in the sequence, with each token $y_i$ (i.e, the action $a_i$) conditioned on the prompt $x$ and the previous tokens $y_{1:i-1}$ (i.e., the state $s_i$). Generation continues until an end-of-sequence (EOS) token is produced, signaling the completion of the response. During this process, the model may produce intermediate reasoning tokens, referred to as a \emph{chain of thought} (CoT) \citep{wei2022chain}, before generating the final answer. Therefore, the full output sequence, denoted as $y$, consists of both the chain of thought and the final answer.



\noindent \textbf{Optimization Objective of Efficient Reasoning.}
The performance of the model in efficient reasoning is assessed along two key dimensions: \textit{accuracy} and \textit{efficiency}.

Accuracy is measured by the \emph{Exact Match} (EM) metric, which evaluates whether the model's final answer matches the ground truth. Let \( \hat{z}(y) \) denote the final answer extracted from the model-generated sequence \( y \), typically corresponding to its final segment. Let \( z \) be the ground-truth answer. Then EM is defined as:
\begin{equation}
\text{EM} = \mathbb{E}_{x \sim \phi} \mathbb{E}_{y \sim \pi_\theta(\cdot|x)} \mathbbm{1}\left[ \hat{z}(y) = z \right],
\end{equation}
where \( \phi \) denotes the distribution over prompts. The indicator function returns 1 if the predicted answer exactly matches the ground truth, and 0 otherwise.

Efficiency is measured by the \emph{response length} $L$, typically defined as the number of tokens $n$ in the generated sequence $y = (y_1, y_2, \dots, y_n)$. While longer sequences may offer detailed reasoning, they often result in higher computational cost. Thus, reducing unnecessary tokens without harming accuracy is crucial for practical deployment. An ideal model achieves high EM while minimizing the average response length $L$, striking a balance between correctness and conciseness.

\noindent \textbf{Token Significance.}
CoT responses consist of a sequence of tokens that contribute unequally to the overall reasoning process. This observation motivates a token-level view of reasoning efficiency, in which the contribution of each token is described by the notion of \emph{token significance}.

Given a generated sequence 
\( y = (y_1, y_2, \dots, y_n) \),
the significance of a token \( y_i \) is defined via a contextual scoring function:
\begin{equation}
\label{eq:significance_task_formulation}
S(y_i) = F\!\left(y_i \mid \mathbf{y}_{\le n}\right),
\end{equation}
where \( F(\cdot) \) assigns a scalar score to a token based on its surrounding context within the full sequence, and \( \mathbf{y}_{\le n} \) denotes the complete set of tokens in the generated response. Tokens associated with higher scores are considered to have higher significance. Details of the scoring function and a discussion of alternative definitions of token significance are presented in Appendix~\ref{ap:sign_measure}.

\subsection{Significance-Aware Length Reward}
\label{sec:significance}



To enhance the efficiency of CoT generation, it is crucial to recognize that not all tokens in a CoT sequence contribute equally to deriving the final answer. Tokens with low importance scores are considered insignificant, while those with high scores are deemed significant. Specifically, we classify tokens as follows:
\begin{equation}
\text{Token } y_i \text{ is } 
\begin{cases}
\text{insignificant}, & \text{if } S(y_i) < \tau, \\
\text{significant}, & \text{if } S(y_i) \geq \tau.
\end{cases}
\end{equation}
\noindent
We then compute the total number of significant tokens $L^s$ and insignificant tokens $L^{is}$ in the response as:
\begin{equation}
L^s = \sum_{i=1}^{n} \mathbbm{1}\left( S(y_i) \geq \tau \right),
L^{is} = \sum_{i=1}^{n} \mathbbm{1}\left( S(y_i) < \tau \right),
\end{equation}
\noindent
where \( \mathbbm{1}(\cdot) \) is the indicator function, and $\tau$ is a pre-defined threshold. 
To encourage brevity while maintaining reasoning quality, we introduce a \textit{\textbf{significance-aware length reward}} that penalizes the excessive use of insignificant tokens through a cosine-based decay:
\begin{equation}
r_{is}(y) = \cos\left( \operatorname{clip}\left( \frac{L^{is}}{L_{\text{ref}}^{is}},\, 0,\, \frac{\pi}{2} \right) \right) + \mathbbm{1}[\hat{z}(y) = z]
\end{equation}
\noindent
where \( L_{\text{ref}}^{is} \) denotes the number of insignificant tokens in a reference response. The cosine function ensures a smooth, non-linear penalty that gradually decreases the reward as \( L^{is} \) increases, while the clipping operation bounds the angle to the interval \([0, \frac{\pi}{2}]\), preventing negative rewards. The final reward combines this length-based penalty with an answer reward derived from the EM indicator, ensuring that answer correctness is preserved.

This reward formulation ensures that shorter or equally concise responses—measured in terms of insignificant content—receive higher rewards, while excessively verbose outputs are gently penalized. 
Notably, our approach preserves natural fluency and coherence in generated text by constraining only the aggregate length of insignificant tokens, without dictating specific token selections or sequences in RL-based training.
Compared to standard length-based penalties, our significance-aware approach achieves equal or greater length reductions with less accuracy degradation by selectively penalizing insignificant tokens, as theoretically justified in Appendix~\ref{ap:theory}.

\subsection{Dynamic Length Reward for Significant Tokens}
\label{sec:dynamic}

While insignificant tokens are consistently penalized to reduce redundancy, significant tokens warrant a more nuanced approach. In the early stages of training, allowing longer reasoning with significant content can facilitate exploration and support the development of robust problem-solving strategies. However, as training progresses, conciseness becomes increasingly important for improving efficiency.

To accommodate this shift, we introduce a \textit{\textbf{dynamic length reward}} for significant tokens that evolves over time based on the model's learning trajectory. This adaptive mechanism is guided by a dynamic scaling factor that captures trends in accuracy and modulates the reward accordingly. Formally, the length-based reward for significant tokens is defined as:
\begin{equation}
r_s(y) = 
\begin{cases} 
k \cdot  \frac{L^s}{L_{\text{ref}}^s}, & \text{if } k \geq \beta \\
-\alpha \cdot t \cdot  \frac{L^s}{L_{\text{ref}}^s}, & \text{if } k < \beta
\end{cases}
\end{equation}
\noindent
where \( L^s \) represents the number of significant tokens in the generated output, \( L_{\text{ref}}^s \) is the corresponding value from the reference model, and \( k \) is a dynamic scaling factor that reflects the reasoning trend during training. The training step \( t \) begins at 1 and increments gradually when \( k \) first falls below the threshold \( \beta \), which determines when the model transitions from incentivizing longer significant token lengths to penalizing them. $\alpha$ is a weight that determines the rate of decay in this process. The value of \( k \) is estimated by fitting a linear model to recent training steps:

\begin{equation}
k = \frac{\sum_{t = S_a}^{S_b} (t - \bar{t})(acc_t - \overline{acc})}{\sum_{t = S_a}^{S_b} (t - \bar{t})^2}
\end{equation}
\noindent
where \( acc_t \) denotes the training batch accuracy at training step \( t \), \( \bar{t} \) and \( \overline{acc} \) are the mean step index and mean accuracy over the interval \( [S_a, S_b] \). A positive \( k \) indicates an upward accuracy trend, suggesting that the model is still in an improvement phase. As training progresses and accuracy plateaus, \(  k \) approaches zero or becomes negative. The theoretical rationale behind the design of our dynamic length reward schedule is discussed in detail in Appendix \ref{ap:discussion_dynamic}. This dynamic adaptation allows the model to balance early-stage exploration with late-stage compression, fostering reasoning strategies that are both effective and efficient.




\subsection{Boosting Efficient Reasoning in Policy Optimization}
\label{sec:bingo}

We propose a novel reinforcement learning algorithm, \textsc{Bingo} (\textbf{B}oosting Efficient Reason\textbf{ING} in Policy \textbf{O}ptimization), designed to jointly optimize reasoning performance and efficiency. \textsc{Bingo} extends the reinforcement learning framework, primarily based on Proximal Policy Optimization (PPO) in this work, by introducing two key innovations: a \textbf{\textit{significance-aware length reward}} and a \textbf{\textit{dynamic length reward}}.

As discussed in Section~\ref{sec:significance}, we begin by categorizing tokens into \emph{significant} and \emph{insignificant} based on their significance scores. To promote concise yet informative responses, we introduce a cosine-based reward function that adjusts penalties according to the length composition of the response. For correctly answered samples, the reward penalizes only the length of the insignificant portion, reducing verbosity while preserving essential reasoning. For incorrect samples, the reward both penalizes the use of insignificant tokens and encourages the generation of more significant reasoning content.

To balance exploration and efficiency over the course of training, we incorporate a time-dependent mechanism that gradually reduces the incentive for longer responses. As detailed in Section~\ref{sec:dynamic}, this dynamic reward decays as the model converges, shifting the focus from exploration to conciseness.

The overall reward $R^\textsc{Bingo}(y)$ formulation integrates these components into a unified objective:
\begin{equation}
\scriptsize
\begin{aligned}
&R^\textsc{Bingo}(y) = \\
&\begin{cases}
\underbrace{\lambda_c \cdot r_{is}(y)}_{\text{\textbf{\textit{\tiny Correct insignificant length}}}}, 
& \text{if correct}, \\[3ex]
\underbrace{\lambda_w^{is} \cdot \left[r_{is}(y)-1\right]}_{\text{\textbf{\textit{\tiny Incorrect insignificant length}}}} 
+ \quad
\underbrace{\min\left(0,\, r_s(y) - \lambda_w^s \right)}_{\text{\textbf{\textit{\tiny Dynamic significant length}}}}, 
& \text{if incorrect}.
\end{cases}
\end{aligned}
\label{eq:unified-reward}
\end{equation}
where the coefficient \( \lambda_c \) controls the strength of the penalty applied to correct responses, while \( \lambda_w^{is} \) determines the magnitude of the penalty for incorrect ones. The parameter \( \lambda_w^{s} \) serves as a dynamic threshold to balance exploration when the model generates incorrect outputs.


We optimize the policy using the proximal policy optimization objective with the reward \( R_\textsc{Bingo} \) defined by Equation~\ref{eq:unified-reward}. The surrogate objective is:

\begin{equation}
\scriptsize
\mathcal{J}_{\textsc{\tiny Bingo}}(\theta) =
\mathbb{E}_t \left[
\min\left(
r_t(\theta) \hat{A}_t,\;
\operatorname{clip}\left(r_t(\theta), 1 - \epsilon, 1 + \epsilon\right) \hat{A}_t
\right)
\right],
\label{eq:ppo-objective}
\end{equation}

\noindent where:
\begin{itemize}
    \item \( r_t(\theta) = \frac{\pi_\theta(a_t \mid s_t)}{\pi_{\theta_{\text{old}}}(a_t \mid s_t)} \) is the importance sampling ratio,
    \item \( \hat{A}_t \) is the advantage estimate at time step \( t \), computed via generalized advantage estimation using the final sequence-level reward \( R^\textsc{Bingo} \) and the value predictions \( V(s_t) \).
    \item \( \epsilon \) is a clipping parameter.
\end{itemize}

Therefore, \textsc{Bingo} achieves a favorable trade-off between accuracy and efficiency by maximizing the objective function $\mathcal{J}_{\textsc{Bingo}}(\theta)$.

\begin{table*}[t]
\centering
\caption{\textbf{Comparison of different length-based rewards on reasoning benchmarks.} 
Each method is evaluated using DeepSeek-R1-Distill-Qwen-1.5B as the base model by answer accuracy (Acc, \%), response length (Len), and length-normalized accuracy (L-Acc, \%). The best performance is highlighted in \textcolor{langdarkblue}{dark blue}, and the second-best in \textcolor{langwildblue}{light blue}. }
\resizebox{\textwidth}{!}{%
\begin{tabular}{llcccccccccccc}
\toprule
\multicolumn{2}{l}{\textbf{Length-based Reward}} 
& \multicolumn{3}{c}{\textbf{MATH500}} 
& \multicolumn{3}{c}{\textbf{GSM8K}} 
& \multicolumn{3}{c}{\textbf{TheoremQA}} 
& \multicolumn{3}{c}{\textbf{AIME2024}} \\
\cmidrule(lr){3-5} \cmidrule(lr){6-8} \cmidrule(lr){9-11} \cmidrule(lr){12-14}
& & Acc$\uparrow$ & Len$\downarrow$ & L-Acc$\uparrow$ 
    & Acc$\uparrow$ & Len$\downarrow$ & L-Acc$\uparrow$ 
    & Acc$\uparrow$ & Len$\downarrow$ & L-Acc$\uparrow$ 
    & Acc$\uparrow$ & Len$\downarrow$ & L-Acc$\uparrow$ \\
\midrule
& Vanilla PPO \citep{Schulman2017ProximalPO}      & 81.4 & 2,771 & 66.2 & 85.4 & 1,310 & 78.2 & 32.3 & 4,146 & 22.7 & 26.7 & 6,961 & 10.3 \\
& O1-Pruner \citep{luo2025o1prunerlengthharmonizingfinetuningo1like}        & 74.4 &   991 & 69.8 & 81.4 &   \textcolor{langdarkblue}{\textbf{211}} & 80.3 & 28.6 &   \textcolor{langdarkblue}{\textbf{524}} & 27.6 & 26.7 & 5,958 & 13.9 \\
& kimi-k1.5 \citep{kimiteam2025kimik15scalingreinforcement}        & 80.4 & 1,692 & 71.6 & 85.4 &   739 & 81.5 & 34.4 & 2,136 & 29.6 & 33.3 & 5,159 & 20.3 \\
& Effi. Reasoning \citep{arora2025traininglanguagemodels} 
    & \textcolor{langdarkblue}{\textbf{82.6}} 
    & 2,395 
    & 69.5 
    & 86.4 
    & 1,155 
    & 80.0 
    & 34.8 
    & 3,560 
    & 26.2 
    & \textcolor{langdarkblue}{\textbf{36.7}} 
    & 5,771 
    & 19.9 \\
& Demystifying \citep{yeo2025demystifyinglongchainofthoughtreasoning}     
    & 80.2 
    & 1,411 
    & 73.0 
    & 86.6 
    &   548 
    & 83.6
    & 35.1 
    & 1,976 
    & 30.6 
    & 30.0 
    & 6,183 
    & 14.9 \\
& DAST \citep{shi2025dastdifficultyadaptiveslowthinking}             
    & 81.2 
    & 1,770 
    & 71.9 
    & 82.0 
    &   456 
    & 79.6 
    & 35.2 
    & 2,325 
    & 29.8 
    & \textcolor{langdarkblue}{\textbf{36.7}} 
    & 5,400 
    & 21.4 \\
\midrule
\multicolumn{2}{l}{\textbf{\textit{Bingo (Ours)}}} & & & & & & & & & & & & \\
& \textbf{Bingo-A} 
    & \textcolor{langwildblue}{\textbf{82.2}} 
    &  \textcolor{langwildblue}{\textbf{894}} 
    & \textcolor{langdarkblue}{\textbf{77.6}} 
    & \textcolor{langdarkblue}{\textbf{87.0}} 
    &   563 
    & \textcolor{langwildblue}{\textbf{83.9}} 
    & \textcolor{langdarkblue}{\textbf{36.8}} 
    & {1,648} 
    & \textcolor{langwildblue}{\textbf{32.9}} 
    & 33.3 
    & \textcolor{langdarkblue}{\textbf{2,943}} 
    & \textcolor{langdarkblue}{\textbf{26.7}}\\
& \textbf{Bingo-E} 
    & 80.6 
    &   \textcolor{langdarkblue}{\textbf{779}} 
    & \textcolor{langwildblue}{\textbf{76.7}} 
    & \textcolor{langwildblue}{\textbf{86.7}} 
    &   \textcolor{langwildblue}{\textbf{345}} 
    & \textcolor{langdarkblue}{\textbf{84.9}} 
    & \textcolor{langwildblue}{\textbf{36.7}} 
    & \textcolor{langwildblue}{\textbf{1,584}} 
    & \textcolor{langdarkblue}{\textbf{33.0}} 
    & 33.3
    & \textcolor{langdarkblue}{\textbf{2,943}} 
    & \textcolor{langdarkblue}{\textbf{26.7}} \\
\bottomrule
\end{tabular}%
}
\label{tab:baseline}
\vspace{-0.7em}
\end{table*}

\begin{table*}[t]
\centering
\setlength{\tabcolsep}{4pt}
\caption{\textbf{Performance comparison across model scales and types.}
Accuracy (Acc), average output length (Length), and length-normalized accuracy (L-Acc, \%) on four benchmarks.
The best performance is highlighted in \textcolor{langdarkblue}{dark blue}, and the second-best in \textcolor{langwildblue}{light blue}.}
\resizebox{0.9\textwidth}{!}{%
\begin{tabular}{lcccccccccccc}
\toprule
\multicolumn{1}{l}{\textbf{Method}}
  & \multicolumn{3}{c}{\textbf{MATH500}}
  & \multicolumn{3}{c}{\textbf{GSM8K}}
  & \multicolumn{3}{c}{\textbf{TheoremQA}}
  & \multicolumn{3}{c}{\textbf{AIME2024}} \\
\cmidrule(lr){2-4}\cmidrule(lr){5-7}\cmidrule(lr){8-10}\cmidrule(lr){11-13}
  & Acc$\uparrow$ & Len$\downarrow$ & L-Acc$\uparrow$
  & Acc$\uparrow$ & Len$\downarrow$ & L-Acc$\uparrow$
  & Acc$\uparrow$ & Len$\downarrow$ & L-Acc$\uparrow$
  & Acc$\uparrow$ & Len$\downarrow$ & L-Acc$\uparrow$ \\
\midrule
\multicolumn{1}{l}{\textit{DeepSeek-R1-Distill-Qwen-1.5B}} \\
\quad Base           & 63.2 & 3,913 & 45.7 & 73.2 & 2,025 & 63.5 & 18.7 & 5,741 & 10.3 & 16.7 & 7,027 & 6.3 \\
\quad PPO            & \textcolor{langwildblue}{81.4} & 2,771 & 66.2 & 85.4 & 1,310 & 78.2 & 32.3 & 4,146 & 22.7 & 26.7 & 6,961 & 10.3 \\
\quad Bingo-A (Ours) & \textcolor{langdarkblue}{82.2} & \textcolor{langwildblue}{894} & \textcolor{langdarkblue}{77.6}
                     & \textcolor{langdarkblue}{87.0} & \textcolor{langwildblue}{563} & \textcolor{langwildblue}{83.9}
                     & \textcolor{langdarkblue}{36.8} & \textcolor{langwildblue}{1,648} & \textcolor{langwildblue}{32.9}
                     & \textcolor{langdarkblue}{33.3} & \textcolor{langdarkblue}{2,943} & \textcolor{langdarkblue}{26.7} \\
\quad Bingo-E (Ours) & 80.6 & \textcolor{langdarkblue}{779} & \textcolor{langwildblue}{76.7}
                     & \textcolor{langwildblue}{86.7} & \textcolor{langdarkblue}{345} & \textcolor{langdarkblue}{84.9}
                     & \textcolor{langwildblue}{36.7} & \textcolor{langdarkblue}{1,584} & \textcolor{langdarkblue}{33.0}
                     & \textcolor{langdarkblue}{33.3} & \textcolor{langdarkblue}{2,943} & \textcolor{langdarkblue}{26.7} \\
\midrule
\multicolumn{1}{l}{\textit{DeepSeek-R1-Distill-Qwen-7B}} \\
\quad Base           & 82.8 & 3,033 & 65.7 & 85.7 & 1,001 & 80.3 & 37.8 & 4,340 & 25.9 & 40.0 & 6,528 & 18.0 \\
\quad PPO            & \textcolor{langwildblue}{88.4} & 1,536 & \textcolor{langwildblue}{79.7}
                     & \textcolor{langdarkblue}{92.9} & 918 & 87.5
                     & \textcolor{langdarkblue}{45.4} & 2,709 & 37.1 & 56.7 & 5,857 & 30.3 \\
\quad Bingo-A (Ours) & \textcolor{langdarkblue}{88.8} & \textcolor{langwildblue}{1,400} & \textcolor{langdarkblue}{80.9}
                     & \textcolor{langwildblue}{92.3} & \textcolor{langwildblue}{371} & \textcolor{langdarkblue}{90.2}
                     & \textcolor{langwildblue}{45.2} & \textcolor{langwildblue}{1,908} & \textcolor{langwildblue}{39.6}
                     & \textcolor{langdarkblue}{63.3} & \textcolor{langwildblue}{4,670} & \textcolor{langwildblue}{41.5} \\
\quad Bingo-E (Ours) & 87.2 & \textcolor{langdarkblue}{1,252} & 80.3
                     & 91.8 & \textcolor{langdarkblue}{366} & \textcolor{langwildblue}{89.7}
                     & 45.0 & \textcolor{langdarkblue}{1,693} & \textcolor{langdarkblue}{40.1}
                     & \textcolor{langwildblue}{60.0} & \textcolor{langdarkblue}{4,011} & \textcolor{langdarkblue}{42.9} \\
\midrule
\multicolumn{1}{l}{\textit{Qwen2.5-Math-7B-Instruct}} \\
\quad Base           & 80.8 & 727 & 70.3 & 95.8 & 331 & 90.3 & 36.8 & 919 & 30.7 & 16.7 & 1,310 & 12.5 \\
\quad PPO            & \textcolor{langwildblue}{82.0} & 670 & 72.3
                     & \textcolor{langdarkblue}{96.6} & 305 & \textcolor{langwildblue}{91.5}
                     & \textcolor{langwildblue}{37.6} & 759 & 32.5
                     & \textcolor{langdarkblue}{20.0} & 1,260 & \textcolor{langwildblue}{15.2} \\
\quad Bingo-A (Ours) & \textcolor{langdarkblue}{82.6} & \textcolor{langwildblue}{656} & \textcolor{langwildblue}{73.0}
                     & \textcolor{langwildblue}{96.1} & \textcolor{langwildblue}{283} & \textcolor{langwildblue}{91.5}
                     & \textcolor{langdarkblue}{37.9} & \textcolor{langwildblue}{598} & \textcolor{langdarkblue}{34.0}
                     & \textcolor{langdarkblue}{20.0} & \textcolor{langwildblue}{892} & \textcolor{langdarkblue}{16.8} \\
\quad Bingo-E (Ours) & 81.6 & \textcolor{langdarkblue}{559} & \textcolor{langdarkblue}{73.6}
                     & 96.0 & \textcolor{langdarkblue}{241} & \textcolor{langdarkblue}{92.0}
                     & 37.1 & \textcolor{langdarkblue}{552} & \textcolor{langwildblue}{33.5}
                     & 16.7 & \textcolor{langdarkblue}{811} & 14.2 \\
\bottomrule
\end{tabular}%
}
\label{tab:model_scale}
\vspace{-0.8em}
\end{table*}

\section{Experiments}

\subsection{Experimental Setup and Evaluation Metrics}
\label{sec:exp}

We fine-tune two reasoning models, \textit{DeepSeek-R1-Distill-Qwen-1.5B} and \textit{DeepSeek-R1-Distill-Qwen-7B}, together with an instruction-tuned model, \textit{Qwen2.5-Math-7B-Instruct}, on the \textsc{MATH} training split. Evaluation is performed on \textsc{MATH500}, \textsc{GSM8K}, \textsc{AIME2024}, and \textsc{TheoremQA}. \textsc{MATH500} serves as the in-distribution benchmark, while the remaining datasets constitute out-of-distribution evaluations, covering both mathematical reasoning, and commonsense question answering.

We compare \textsc{Bingo} against a frozen base model, a Vanilla PPO baseline, and several recent state-of-the-art methods. To enable fair comparison, all baselines are re-implemented within a unified PPO framework, isolating the effect of different length-based reward designs. Detailed experimental settings and baseline descriptions are provided in Appendix~\ref{ap:detail_setting}. A thorough hyperparameter study can be found in Appendix~\ref{ap:hyperparameter}.

We report two model variants: \textsc{Bingo-A}, selected for peak validation accuracy, and \textsc{Bingo-E}, selected for stable response length. This dual-reporting strategy enables practitioners to choose a model variant based on their preference for accuracy or efficiency. They may correspond to the same checkpoint. To evaluate reasoning efficiency, we report accuracy (Acc), response length (Len), and length-normalized accuracy (L-Acc), which jointly measures correctness and conciseness:
\begin{equation}
  \mathrm{L\text{-}Acc}=
  \mathrm{Acc}\times\sqrt{1-\frac{L}{L_{\max}}}.
\end{equation}
where $L$ is the average response length and $L_{\max}$ is the maximum allowable length. A detailed definition and theoretical analysis of L-Acc are provided in Appendix~\ref{ap:l-acc}.

\subsection{Performance Comparison with Baseline Methods}

\noindent \textbf{\textsc{Bingo} outperforms existing methods in L-Acc:} As shown in Table~\ref{tab:baseline}, both \textsc{Bingo-A} and \textsc{Bingo-E} achieve the highest L-Acc across all four benchmarks, outperforming previous baselines such as Vanilla PPO, Efficient Reasoning, and DAST.

\noindent \textbf{\textsc{Bingo-A} improves accuracy while significantly reducing response length:} \textsc{Bingo-A} reduces average response length by up to 68\% compared to Vanilla PPO (e.g., 894 vs. 2,771 tokens on \textsc{MATH500}), demonstrating the model's ability to generate concise and correct reasoning steps.

\noindent \textbf{Existing baselines struggle with the trade-off between accuracy and brevity:} Approaches like Efficient Reasoning produce verbose outputs, while methods such as O1-Pruner overly shorten responses, compromising accuracy.

\subsection{Performance Evaluation across Varying Model Scales}

\noindent \textbf{\textsc{Bingo} achieves the best trade-off between accuracy and response length across different model sizes:} As shown in Table~\ref{tab:model_scale}, both \textsc{Bingo-A} and \textsc{Bingo-E} consistently outperform all other methods across various model sizes (1.5B and 7B parameters) and benchmarks, achieving the highest L-Acc while maintaining competitive or superior accuracy.

\noindent \textbf{\textsc{Bingo-E} offers a substantial reduction in response length without sacrificing accuracy:} \textsc{Bingo-E} reduces response length by up to 63\% (e.g., 366 vs. 1,001 tokens on \textsc{GSM8K}) compared to the Base model, while also improving accuracy by 6.1 percentage points, demonstrating the model's ability to generate concise and accurate reasoning steps.

\subsection{Analysis of Significant versus Insignificant Token Ratio}

\begin{figure*}[t]
    \centering
    \includegraphics[width=0.9\textwidth]{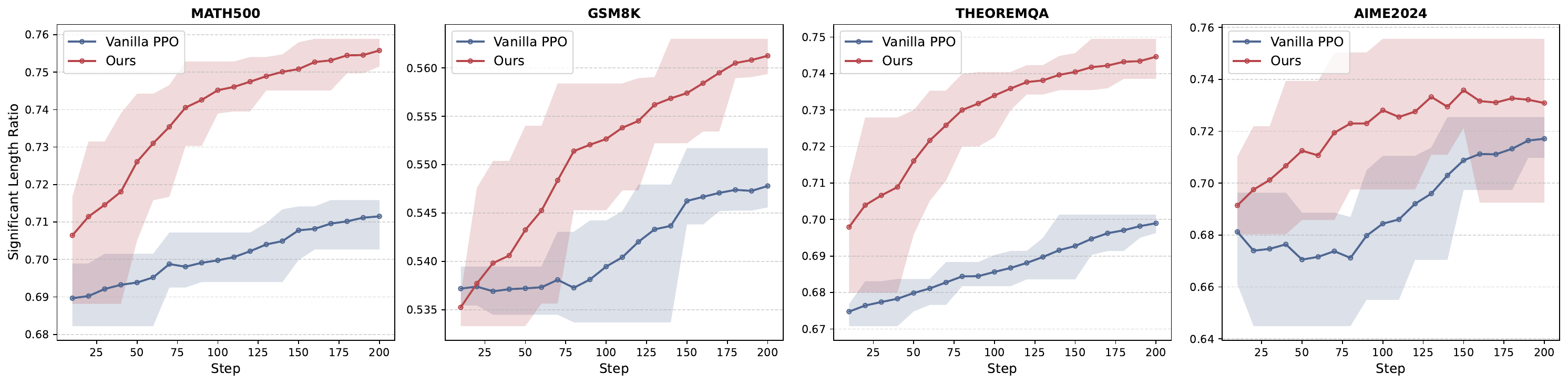}
    \caption{\textbf{Significant Length Ratio dynamics during training.} The x-axis indicates training steps, and the y-axis denotes the proportion of significant tokens in the generated responses. 
Each subplot corresponds to one benchmark evaluated using DeepSeek-R1-Distill-Qwen-1.5B as the base model. 
The blue curve represents the baseline method (Vanilla PPO), and the red curve represents our approach (Ours). 
}
    \label{fig:sig_length}
    \vspace{-0.7em}
\end{figure*}

Our significance-aware reward consistently increases the proportion of significant tokens across all datasets (e.g., from 0.71 to 0.75 on MATH500), leading to shorter and more focused reasoning chains, as illustrated in Figure~\ref{fig:sig_length}. By reducing redundant tokens, this results in lower response length and improved Acc and L-Acc, demonstrating enhanced reasoning efficiency without sacrificing essential content.

\begin{table*}[t]
\centering
\caption{\textbf{Ablation study on reward components.} 
Each method is evaluated using DeepSeek-R1-Distill-Qwen-1.5B as the base model by answer accuracy (Acc, \%), response length (Len), and length-normalized accuracy (L-Acc, \%).
Values in parentheses indicate the relative drop in L-Acc compared to \textsc{Bingo-A}.
The best performance is highlighted in \textcolor{langdarkblue}{dark blue}, and the second-best in \textcolor{langwildblue}{light blue}.}
\resizebox{\textwidth}{!}{%
\begin{tabular}{lccccccccccccc}
\toprule
\multicolumn{1}{l}{\textbf{Method}} 
  & \multicolumn{3}{c}{\textbf{MATH500}}
  & \multicolumn{3}{c}{\textbf{GSM8K}}
  & \multicolumn{3}{c}{\textbf{TheoremQA}}
  & \multicolumn{3}{c}{\textbf{AIME2024}} \\
\cmidrule(lr){2-4} \cmidrule(lr){5-7} \cmidrule(lr){8-10} \cmidrule(lr){11-13}
  & Acc$\uparrow$ & Len$\downarrow$ & L-Acc$\uparrow$
  & Acc$\uparrow$ & Len$\downarrow$ & L-Acc$\uparrow$
  & Acc$\uparrow$ & Len$\downarrow$ & L-Acc$\uparrow$
  & Acc$\uparrow$ & Len$\downarrow$ & L-Acc$\uparrow$ \\
\midrule
\textbf{Bingo-A (Ours)}
  & \textcolor{langdarkblue}{\textbf{82.2}} & \textcolor{langdarkblue}{\textbf{894}} & \textcolor{langdarkblue}{\textbf{77.6}}
  & \textcolor{langdarkblue}{\textbf{87.0}} & 563 & \textcolor{langdarkblue}{\textbf{83.9}}
  & \textcolor{langwildblue}{\textbf{36.8}} & \textcolor{langwildblue}{\textbf{1,648}} & \textcolor{langdarkblue}{\textbf{32.9}}
  & \textcolor{langwildblue}{\textbf{33.3}} & \textcolor{langdarkblue}{\textbf{2,943}} & \textcolor{langdarkblue}{\textbf{26.7}} \\
\midrule
Vanilla PPO
  & \textcolor{langwildblue}{\textbf{81.4}} & 2,771 & 66.2 \textcolor{gray}{(-14.7)}
  & 85.4 & 1,310 & 78.2 \textcolor{gray}{(-6.8)}
  & 32.3 & 4,146 & 22.7 \textcolor{gray}{(-31.0)}
  & 26.7 & 6,961 & 10.3 \textcolor{gray}{(-16.4)} \\
Significance-Aware Length Reward
  & \textcolor{langwildblue}{\textbf{81.4}} & 1,734 & \textcolor{langwildblue}{\textbf{72.3}} \textcolor{gray}{(-5.3)}
  & \textcolor{langwildblue}{\textbf{86.7}} & 742 & 82.6 \textcolor{gray}{(-1.3)}
  & 36.0 & 2,841 & 29.1 \textcolor{gray}{(-3.8)}
  & \textcolor{langdarkblue}{\textbf{40.0}} & 5,138 & \textcolor{langwildblue}{\textbf{24.4}} \textcolor{gray}{(-2.3)} \\
\quad w/o Cosine
  & 78.6 & 1,750 & 69.7 \textcolor{gray}{(-7.9)}
  & 85.7 & \textcolor{langwildblue}{\textbf{509}} & 83.0 \textcolor{gray}{(-0.9)}
  & 35.3 & 2,414 & 29.7 \textcolor{gray}{(-3.2)}
  & \textcolor{langwildblue}{\textbf{33.3}} & 6,454 & 15.4 \textcolor{gray}{(-11.3)} \\
\quad w/o Significance Separation
  & 79.8 & 1,666 & 71.2 \textcolor{gray}{(-6.4)}
  & 86.6 & 604 & \textcolor{langwildblue}{\textbf{83.3}} \textcolor{gray}{(-0.6)}
  & \textcolor{langdarkblue}{\textbf{36.9}} & 2,328 & 31.3 \textcolor{gray}{(-1.6)}
  & 26.7 & 5,702 & 14.7 \textcolor{gray}{(-12.0)} \\
\quad w/o Length Incentive
  & 77.8 & \textcolor{langwildblue}{\textbf{1,400}} & 70.8 \textcolor{gray}{(-6.8)}
  & 82.6 & \textcolor{langdarkblue}{\textbf{425}} & 80.5 \textcolor{gray}{(-3.4)}
  & 35.7 & \textcolor{langdarkblue}{\textbf{1,636}} & \textcolor{langwildblue}{\textbf{32.0}} \textcolor{gray}{(-0.9)}
  & 30.0 & \textcolor{langwildblue}{\textbf{4,157}} & 21.1 \textcolor{gray}{(-5.6)} \\
\midrule
Dynamic Length Reward
  & 79.0 & 2,204 & 67.5 \textcolor{gray}{(-10.1)}
  & 84.3 & 955 & 79.2 \textcolor{gray}{(-4.7)}
  & 33.9 & 2,632 & 27.9 \textcolor{gray}{(-5.0)}
  & 30.0 & 5,047 & 18.6 \textcolor{gray}{(-8.1)} \\
\bottomrule
\end{tabular}%
}
\label{tab:ablation}
\vspace{-1em}
\end{table*}

\subsection{Ablation Study}


\noindent \textbf{Combining Significance-Aware and Dynamic Length rewards yields the best trade-off:} Table \ref{tab:ablation} shows that the joint use of both the Significance-Aware and Dynamic Length rewards (\textsc{Bingo-A}) provides the best performance, achieving the highest accuracy and L-Acc across all four benchmarks, while maintaining competitive or superior raw accuracy compared to other methods.

\noindent \textbf{Removing key reward components degrades performance significantly:} Ablations show that removing any of the reward components leads to noticeable performance drops, particularly in terms of L-Acc. 


\subsection{Additional Experiments and Analysis}

\noindent \textbf{\textsc{Bingo} improves across multiple RL algorithms:} We evaluate the generalizability of our reward design by integrating it into other RL algorithms, including RLOO, GRPO, and Reinforce++. As shown in Appendix~\ref{ap:rl}, \textsc{Bingo} variants consistently outperform vanilla ones, achieving superior performance in both accuracy and L-Acc.

\noindent \textbf{\textsc{Bingo} effectively reduces response length, especially for incorrect samples:} The distribution of response lengths for correct vs. incorrect samples in Appendix~\ref{ap:response_c_w} shows that \textsc{Bingo} significantly shortens incorrect sample lengths compared to PPO. Furthermore, Figure \ref{fig:correct_wrong} in Appendix~\ref{ap:response_c_w} illustrates that incorrect samples show a more significant reduction in response length during training, confirming the dynamic reward's effectiveness. Figure \ref{fig:length} in Appendix~\ref{ap:response_trends} further shows that our method consistently reduces response length more than PPO.

\noindent \textbf{Analysis of significant token ratio and token-level significance visualization:} Appendix~\ref{ap:sig_token_ratio} shows that \textsc{Bingo} increases the proportion of significant tokens compared to baselines, while Appendix~\ref{ap:sig_visualization} provides a token-level significance visualization, demonstrating how our approach retains essential reasoning steps and eliminates redundancy.

\noindent \textbf{Analysis confirms the effectiveness of dynamic and significance rewards:} The analysis in Appendix~\ref{ap:response_reward} validates that our dynamic and significance rewards balance exploration and efficiency. A case study in Appendix~\ref{ap:case_study} further demonstrates the practical impact of \textsc{Bingo} on reasoning efficiency.

\section{Conclusion}
In this work, we revisit reasoning efficiency in LLMs from a token-level perspective and propose \textsc{Bingo}, an RL framework that jointly optimizes accuracy and efficiency. By incorporating token significance into reward design, \textsc{Bingo} selectively penalizes insignificant tokens while preserving essential reasoning, and uses a dynamic length reward to balance exploration and conciseness during training, achieving a favorable trade-off between correctness and efficiency.

\section*{Limitations}
Despite the promising results achieved by \textsc{Bingo}, several aspects warrant further exploration. First, our current study instantiates \textsc{Bingo} within a PPO-based training framework, and we leave a more comprehensive investigation of its compatibility with other reinforcement learning algorithms to future work. While the underlying design is general, different optimization schemes may require tailored adaptations to fully realize its potential. Second, since \textsc{Bingo} follows standard PPO-style optimization, its training cost is similar to that of PPO. Future work may explore more efficient training strategies, such as curriculum learning or partial parameter updates, to further improve scalability.

\section*{Acknowledgments}
Life, much like the realm of reasoning where complexity often clouds clarity, is filled with choices. Each decision, each token, carries with it uncertainty, hesitation, and fear. Yet amidst this ambiguity, we seek something deeper: to make each step joyful and meaningful, much like selecting only the most significant tokens. \textsc{Bingo} is our call to action, a declaration that precision and progress emerge not from waiting but from doing. We must summon the courage to take that first bold step, embracing uncertainty without fear and moving forward without hesitation. In doing so, we choose joy, and we choose meaning. As we journey onward, token by token and challenge by challenge, we find that with each thoughtful step, the path unfolds. And in the end, the answer becomes clear, resounding unmistakably: \textit{\textbf{Bingo}}. Thanks for everything throughout this journey.

\bibliography{custom}

\clearpage
\DoToC
\clearpage

\appendix


\section{Ethical Considerations}
Our research focuses on improving the efficiency of large language model reasoning through reinforcement learning techniques, which poses no direct ethical concerns regarding human subjects, as no human data collection or experimentation was conducted. All datasets used (MATH, GSM8K, TheoremQA, AIME2024) are publicly available benchmarks with proper citations. We acknowledge the broader implications of more efficient LLM reasoning, including potential dual-use concerns, but emphasize that our contributions aim to reduce computational costs and environmental impact of AI systems. The research was conducted with academic integrity, and all authors have reviewed and agree with the content presented. There are no conflicts of interest to declare.

\section{Reproducibility Statement}
To ensure reproducibility, we provide comprehensive implementation details throughout the paper. Section~\ref{sec:method} describes our complete algorithmic framework, including all hyperparameters and reward formulations. Section~\ref{sec:exp} details our experimental setup, while Appendix~\ref{ap:detail_setting} provides comprehensive settings including training and evaluation configurations, dataset settings, optimization parameters, data splits, evaluation metrics, and computational requirements. All experiments use publicly accessible pre-trained models (DeepSeek-R1-Distill-Qwen-1.5B, DeepSeek-R1-Distill-Qwen-7B, and Qwen2.5-Math-7B-Instruct) and datasets available on HuggingFace. Our code is available through an anonymous link in the abstract and as a zip file in the supplemental materials, and will be made publicly available upon acceptance.

\section{Discussion of Token Significance Measurement}
\label{ap:sign_measure}

As introduced in Section~\ref{sec:task}, token significance is computed using a contextual scoring function. In our work, this function is instantiated with \textit{LLMLingua-2}~\citep{pan2024llmlingua2datadistillationefficient}. Given a generated sequence
\( y = (y_1, y_2, \dots, y_n) \),
each token \( y_i \) is assigned a significance score
\begin{equation}
S(y_i) = P(y_i \mid \mathbf{y}_{\le n}; \boldsymbol{\theta}_{\mathcal{M}_e}),
\end{equation}
where \( \boldsymbol{\theta}_{\mathcal{M}_e} \) denotes the parameters of a bidirectional encoder model. By conditioning on both preceding and succeeding context, this formulation enables a balanced assessment of token-level contribution within the full sequence and mitigates positional effects inherent to left-to-right estimation. In practice, \textit{LLMLingua-2} provides a lightweight and computationally efficient instantiation of this contextual scoring function.

To further assess whether \textit{LLMLingua-2} provides reliable token significance estimates for our setting, we manually annotated 100 representative samples and compared human labels against \textit{LLMLingua-2}-generated labels. In the manual annotation, tokens labeled as \emph{significant} mainly included key derivation steps, numerical values, formulas, and essential action verbs, while tokens labeled as \emph{insignificant} mainly included common linking words, fillers, and conjunctions such as ``and'' and ``or.'' The comparison results are reported in Table~\ref{tab:llmlingua2_manual_eval}. \textit{LLMLingua-2} achieves an average precision of 0.8026, an average recall of 0.9222, and an average F1-score of 0.8518. In particular, the high recall indicates that \textit{LLMLingua-2} successfully captures most tokens that humans identify as significant. Although minor discrepancies remain, which is expected given linguistic ambiguity and annotation subjectivity, the overall results suggest that \textit{LLMLingua-2} provides sufficiently accurate and robust token significance estimates for our purpose.

\begin{table}[t]
\centering
\small
\caption{Comparison between manual token significance annotations and \textit{LLMLingua-2}-generated labels on 100 representative samples.}
\begin{tabular}{lc}
\toprule
Metric & Score \\
\midrule
Average Precision & 0.8026 \\
Average Recall & 0.9222 \\
Average F1-score & 0.8518 \\
\bottomrule
\end{tabular}
\label{tab:llmlingua2_manual_eval}
\end{table}

An alternative approach is \textit{Selective Context}~\citep{li2023compressingcontextenhanceinference}, which defines token importance using a unidirectional language modeling objective:
\begin{equation}
S(y_i) = -\log P(y_i \mid y_{<i}; \theta_{M_L}),
\end{equation}
where \( \theta_{M_L} \) denotes the parameters of a unidirectional language model. Under this formulation, token importance is interpreted through model uncertainty, with less predictable tokens receiving higher scores. However, because it relies exclusively on past context, this approach is prone to position bias and cannot capture interactions between a token and subsequent reasoning steps, thereby limiting its ability to assess token-level contribution within the complete reasoning trace.

In contrast to such unidirectional uncertainty-based measures, contextual scoring based on bidirectional encoders offers a more holistic view of token significance. By evaluating tokens with respect to the entire sequence, it better aligns with the objective of discouraging redundant or low-contribution tokens while preserving essential reasoning content. The manual evaluation above further supports that \textit{LLMLingua-2} can identify human-recognized significant tokens with high fidelity, making it a suitable choice for our token significance measurement. The effectiveness of this type of contextual, bidirectional scoring has also been supported by recent work such as TokenSkip~\citep{xia2025tokenskipcontrollablechainofthoughtcompression}, which demonstrates its ability to identify redundant components in chain-of-thought reasoning.

\section{Theoretical Analysis of Significance-Aware Length Reward}
\label{ap:theory}

\vspace{0.5\baselineskip}
\noindent\textbf{Preliminaries and Notation.}  
Given a prompt $x$, the policy $\pi_\theta$ generates a chain-of-thought
(CoT) sequence $Y = (y_1,\dots,y_T)$, from which a deterministic decoder produces a final answer $\hat{Z} \in \mathcal{Z}$.  
To assess the relative informativeness of each token, we compute a \emph{significance score} using \textit{LLMLingua-2}~\citep{pan2024llmlingua2datadistillationefficient}:

\begin{equation}
S(y_i) = P(y_i \mid Y;\,\boldsymbol{\theta}_{\mathcal{M}_e}),
\end{equation}

where $\boldsymbol{\theta}_{\mathcal{M}_e}$ denotes the parameters of a bidirectional encoder model.  
Unlike unidirectional predictors, \textit{LLMLingua-2} uses both left and right context to provide a holistic estimate of token informativeness.  
Based on a fixed threshold $\tau$, we partition the sequence as:

\begin{gather}
\mathcal{Y}_{\mathrm{sig}} = \{y_i : S(y_i) \ge \tau\},\\
\mathcal{Y}_{\mathrm{insig}} = \{y_i : S(y_i) < \tau\},
\end{gather}

where $\mathcal{Y}_{\mathrm{sig}}$ and $\mathcal{Y}_{\mathrm{insig}}$ denote the sets of \textit{significant} and \textit{insignificant} tokens, respectively.

\medskip
\noindent\textbf{Motivation for a Mutual Information Proxy.}  
In principle, each token’s importance could be measured by its mutual information with the final answer, $I(y_i;Z^\star)$.  
However, computing the exact joint distribution $p(y_i, Z^\star)$ is intractable due to the vast generation space and limited supervision.  
Instead, we employ a proxy that is
(i) efficient to compute for each token and
(ii) monotonically correlated with $I(y_i;Z^\star)$.

\textit{LLMLingua-2} \citep{pan2024llmlingua2datadistillationefficient} satisfies these requirements by training under an information bottleneck objective:
\begin{equation}
I(T;Y) - \beta\,I(T;Z^\star),
\end{equation}
where $T$ is the retained subsequence. Tokens with low conditional probability typically carry little additional information about $Z^\star$, while high-probability tokens preserve essential semantics.

\medskip
\begin{assumption}[Fidelity of the Mutual Information Proxy]
\label{as:mi-proxy}
There exist constants $c > \varepsilon > 0$ such that
\begin{gather}
I(y_i;Z^\star)\le\varepsilon
\quad(\forall\,y_i\in\mathcal{Y}_{\mathrm{insig}}), \\
I(y_j;Z^\star)\ge c-\varepsilon
\quad(\forall\,y_j\in\mathcal{Y}_{\mathrm{sig}}).
\end{gather}
\end{assumption}

\medskip
\begin{lemma}[Bounded Accuracy Loss]
\label{lem:acc-loss}
Let $\hat{Z}_{\text{full}}$ denote the answer decoded from the full CoT, and
$\hat{Z}_{\text{sig}}$ the answer decoded after removing
$\mathcal{Y}_{\mathrm{insig}}$.  
Under Assumption~\ref{as:mi-proxy}, the increase in error probability is bounded:
\begin{equation}
\Bigl|
\Pr[\hat{Z}_{\text{sig}}\neq Z^\star]
-
\Pr[\hat{Z}_{\text{full}}\neq Z^\star]
\Bigr|
\;\le\;\varepsilon.
\end{equation}
\end{lemma}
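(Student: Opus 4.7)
The plan is to translate the per-token mutual-information bounds of Assumption~\ref{as:mi-proxy} into a bound on the decoding error gap via a two-stage argument: first a coupling reduction, then an information-theoretic inequality. The key intuition is that removing tokens carrying at most \(\varepsilon\) bits about \(Z\) cannot shift the decoder's posterior over \(Z\) by more than a small amount in total variation, and the induced change in error probability inherits this bound.

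First I would reduce the two-sided statement to a disagreement bound. By the triangle inequality applied to indicator functions,
\begin{equation}
\bigl| P[\hat{Z}_{\text{sig}} \neq Z] - P[\hat{Z}_{\text{full}} \neq Z] \bigr| \le P[\hat{Z}_{\text{sig}} \neq \hat{Z}_{\text{full}}].
\end{equation}
Since both decoders are deterministic functions of their respective CoT inputs, the right-hand side is controlled by the expected total variation between the posteriors \(P(Z \mid Y_{\text{full}})\) and \(P(Z \mid Y_{\text{sig}})\). Pinsker's inequality bounds this TV distance by a KL divergence whose expectation equals the conditional mutual information \(I(Y_{\text{insig}}; Z \mid Y_{\text{sig}})\); by the chain rule, the latter is exactly the information loss \(I(Y_{\text{full}}; Z) - I(Y_{\text{sig}}; Z)\) incurred by pruning. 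An alternative route is to apply Fano's inequality to each decoder and subtract, yielding an error gap bounded by the same conditional MI up to a constant factor depending on \(\log|\mathcal{Z}|\).

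The main obstacle is showing that Assumption~\ref{as:mi-proxy} actually controls the joint quantity \(I(Y_{\text{insig}}; Z \mid Y_{\text{sig}})\) by \(\varepsilon\): the stated assumption only guarantees marginal per-token bounds \(I(y_i;Z) \le \varepsilon\), and under arbitrary dependence these do not aggregate into a joint conditional-MI bound. I would address this by either (i) strengthening Assumption~\ref{as:mi-proxy} to a joint form \(I(Y_{\text{insig}}; Z \mid Y_{\text{sig}}) \le \varepsilon\), which is consistent with the information-bottleneck training objective of \textit{LLMLingua-2}, or (ii) invoking conditional independence of insignificant tokens given the significant subsequence, under which subadditivity of conditional MI delivers the bound. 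A secondary subtlety is that a naive application of Pinsker introduces a \(\sqrt{\cdot/2}\) factor not present in the stated inequality, so \(\varepsilon\) in the lemma should either be interpreted up to a universal constant or be redefined to absorb this factor; the Fano route similarly requires absorbing a \(\log|\mathcal{Z}|\) normalization.
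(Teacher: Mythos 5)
Your proposal follows essentially the same information-theoretic route as the paper: decompose the information loss from pruning via the chain rule, identify it with a sum of conditional mutual-information terms over the insignificant tokens, and then convert the resulting MI gap into an error-probability gap via Fano (the paper) or Pinsker plus a coupling argument (your primary variant). The two ``obstacles'' you flag are genuine, and you should know that the paper's own proof does not resolve them either --- it simply steps over them. Specifically, the paper writes \(I(Y;Z) = I(Y_{\mathrm{sig}};Z) + \sum_{y_i\in\mathcal{Y}_{\mathrm{insig}}} I(y_i;Z\mid Y_{<i})\) and asserts that ``each term in the sum is at most \(\varepsilon\),'' but Assumption~\ref{as:mi-proxy} only bounds the \emph{marginal} quantities \(I(y_i;Z)\), not the conditional ones \(I(y_i;Z\mid Y_{<i})\); this is exactly the marginal-versus-joint aggregation issue you raise, and your proposed fixes (strengthening the assumption to a joint conditional-MI bound, or positing conditional independence given \(Y_{\mathrm{sig}}\)) are the right repairs. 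Likewise, the paper obtains the bound \(I(Y;Z)-I(Y_{\mathrm{sig}};Z)\le T\varepsilon\) and then claims, in one clause, that Fano translates this into an error increase of at most \(\varepsilon\) --- the factor of \(T\) silently disappears and the \(\log|\mathcal{Z}|\) normalization you mention is never addressed. So your proposal is not missing anything relative to the paper; if anything it is the more honest account, and the constant-factor caveats you state (interpreting \(\varepsilon\) up to a universal constant or a \(\sqrt{\cdot}\) from Pinsker) would need to be made explicit for the lemma to hold as literally stated.
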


\begin{proof}
Let $Y=(y_1,\dots,y_T)$ and
$Y_{\!\mathrm{sig}} = Y \setminus \mathcal{Y}_{\mathrm{insig}}$.
By the chain rule:
\begin{equation}
I(Y;Z^\star) = I(Y_{\!\mathrm{sig}};Z^\star) +
         \sum_{y_i\in\mathcal{Y}_{\mathrm{insig}}}
         I(y_i;Z^\star\mid Y_{<i}),
\end{equation}
and each term in the sum is at most $\varepsilon$. Therefore,
\begin{equation}
I(Y;Z^\star) - I(Y_{\!\mathrm{sig}};Z^\star) \le T\varepsilon,
\end{equation}
and by Fano’s inequality, this gap translates into an error increase of at most $\varepsilon$.
\end{proof}

\medskip
\begin{definition}[General vs.\ Significance-Aware Length Reward]
\label{def:rewards}
For a generated trace $Y$, define two reward functions:
\begin{align}
R_{\mathrm{len}}(Y)
&= \mathbbm{1}[\hat{Z}(Y)=Z^\star] - \lambda\,|Y|, \\
R_{\mathrm{sig}}(Y)
&= \mathbbm{1}[\hat{Z}(Y)=Z^\star] - \lambda\,|\mathcal{Y}_{\mathrm{insig}}|.
\end{align}
Here, $R_{\mathrm{len}}$ penalizes total length, while
$R_{\mathrm{sig}}$ penalizes only insignificant tokens.
\end{definition}

\medskip
\begin{theorem}[Benefit of the Significance-Aware Reward]
\label{thm:sig-better}
Let $\pi_\theta$ be updated by a single PPO step using either reward, with the same coefficient $\lambda > 0$.  
If
\begin{equation}
\lambda
>
\frac{\varepsilon}
     {\mathbb{E}_{\pi_\theta}[|\mathcal{Y}_{\mathrm{sig}}|]},
\end{equation}
then
\begin{equation}
\mathbb{E}_{\pi_\theta}[R_{\mathrm{sig}}(Y)]
\;>\;
\mathbb{E}_{\pi_\theta}[R_{\mathrm{len}}(Y)].
\end{equation}
\end{theorem}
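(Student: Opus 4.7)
The plan is to evaluate the gap $\mathbb{E}_{\pi_\theta}[R_{\mathrm{sig}}(Y) - R_{\mathrm{len}}(Y)]$ directly, using Lemma~\ref{lem:acc-loss} and Assumption~\ref{as:mi-proxy} to control the accuracy component and the definitions of the two rewards to control the length component. First I would decompose each reward into its accuracy indicator and its linear length penalty; since $T = |\mathcal{Y}_{\mathrm{sig}}| + |\mathcal{Y}_{\mathrm{insig}}|$, the penalty part of $R_{\mathrm{sig}} - R_{\mathrm{len}}$ is exactly $\lambda|\mathcal{Y}_{\mathrm{sig}}|$, so under identical sampling the expected gap reduces to $\lambda\,\mathbb{E}_{\pi_\theta}[|\mathcal{Y}_{\mathrm{sig}}|]$ plus a difference of accuracy terms that must then be bounded.

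To explain why the stated threshold on $\lambda$ is what is needed, I would interpret the single PPO step as producing two distinct induced policies: one pressured by $R_{\mathrm{len}}$ to shorten all tokens, and one pressured by $R_{\mathrm{sig}}$ to shorten only $\mathcal{Y}_{\mathrm{insig}}$. Under the second, Lemma~\ref{lem:acc-loss} caps the drop in $\mathbb{E}[\mathbbm{1}[\hat{z}(Y) = z]]$ at $\varepsilon$; under the first, Assumption~\ref{as:mi-proxy} forces every token removed from $\mathcal{Y}_{\mathrm{sig}}$ to cost at least $c - \varepsilon$ bits of mutual information with $Z$, which translates via the same Fano-type estimate used inside Lemma~\ref{lem:acc-loss} to a strictly larger accuracy loss. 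The net lower bound on the reward gap then takes the form $\lambda\,\mathbb{E}_{\pi_\theta}[|\mathcal{Y}_{\mathrm{sig}}|] - \varepsilon$, and the hypothesis $\lambda > \varepsilon / \mathbb{E}_{\pi_\theta}[|\mathcal{Y}_{\mathrm{sig}}|]$ is exactly what is needed to make this quantity strictly positive.

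The main obstacle is formalizing what the shared expectation $\mathbb{E}_{\pi_\theta}[\cdot]$ means across two rewards that are designed to pull the policy in different directions, since a naive reading under a fixed $\pi_\theta$ would make the accuracy indicators cancel and render the threshold on $\lambda$ superfluous. I would handle this by appealing to the first-order effect of a PPO step, where each token's probability shifts in proportion to the coefficient it receives in the reward; the induced shift under $R_{\mathrm{sig}}$ leaves $\mathcal{Y}_{\mathrm{sig}}$ essentially undisturbed, while the shift under $R_{\mathrm{len}}$ does not. Once this structural asymmetry is written down, Lemma~\ref{lem:acc-loss} and Assumption~\ref{as:mi-proxy} convert it into the accuracy gap that the threshold on $\lambda$ dominates, and the claim follows.
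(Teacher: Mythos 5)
Your proposal matches the paper's proof: the same decomposition of the reward gap into $\lambda\,\mathbb{E}_{\pi_\theta}\bigl[|\mathcal{Y}_{\mathrm{sig}}|\bigr]$ minus an accuracy difference $\Delta_{\text{Acc}}$ that Lemma~\ref{lem:acc-loss} bounds between $0$ and $\varepsilon$, after which the stated threshold on $\lambda$ yields strict positivity. The obstacle you flag --- that under a single fixed $\pi_\theta$ the two accuracy indicators cancel exactly, making the threshold superfluous --- is genuine, and the paper's one-line proof simply leaves the meaning of $\Delta_{\text{Acc}}$ implicit; your reading of the expectations as taken under the two differently-pressured (post-update) policies is the same tacit assumption the paper relies on, so you have reproduced its argument while being more candid about its weakest point.
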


\begin{proof}
Lemma~\ref{lem:acc-loss} implies
\begin{equation}
\mathbb{E}[R_{\mathrm{sig}}] - \mathbb{E}[R_{\mathrm{len}}]
=
\lambda\,\mathbb{E}[|\mathcal{Y}_{\mathrm{sig}}|]
- \Delta_{\text{acc}},
\end{equation}
where
\begin{equation}
0 \le \Delta_{\text{acc}} \le \varepsilon.
\end{equation}
Under the stated bound on $\lambda$, the difference is strictly positive.
\end{proof}

\medskip
\noindent\textbf{Practical Implication.}  
The significance-aware reward achieves the same or greater length reduction with provably smaller accuracy degradation than a general length reward. By selectively penalizing insignificant tokens, it still encourages conciseness while maintaining fidelity. With \textit{LLMLingua-2} providing a fast proxy for token–answer informativeness, this reward design supports both principled and practical optimization for efficient reasoning.

\section{Theoretical Discussion of Dynamic Length Reward}
\label{ap:discussion_dynamic}

We provide a theoretical discussion of the motivation for our dynamic length reward schedule by addressing three key questions:
\begin{enumerate}[leftmargin=1.5em]
  \item Why does encouraging longer chains of thought (CoT) during early training help exploration?
  \item Why does applying a fixed length penalty throughout training limit performance?
  \item Why does dynamically flipping the reward from positive to negative upon convergence yield better accuracy–efficiency trade-offs?
\end{enumerate}

\noindent\textbf{1. Longer CoT Enables Richer Exploration.}

Let $P_t(L)$ denote the model’s distribution over output length $L$ at training step $t$.  
Define the expected accuracy given length $L$ as
\begin{equation}
\mathrm{Acc}(L) \;=\; \Pr[\hat{Z}=Z^\star \mid L(Y)=L],
\end{equation}
where $\hat{Z}$ is the predicted answer and $Z^\star$ the ground-truth.  
Empirically, $\mathrm{Acc}(L)$ follows a saturating “S-curve”:
\begin{gather}
\frac{d}{dL}\,\mathrm{Acc}(L) > 0 \quad \text{for } L < L^\star,\\
\frac{d}{dL}\,\mathrm{Acc}(L) \approx 0 \quad \text{for } L \ge L^\star,
\end{gather}
where $L^\star$ is the length at which accuracy saturates.  
The expected accuracy at step $t$ is
\begin{equation}
\mathrm{Acc}_t = \sum_{L} P_t(L)\,\mathrm{Acc}(L).
\end{equation}
Shifting probability mass toward longer CoT (up to $L^\star$) thus increases $\mathrm{Acc}_t$, since longer reasoning expands exploration and raises the chance of discovering correct solution patterns.

\paragraph{Takeaway.} Rewarding longer CoT early boosts exploration and accelerates convergence toward high accuracy.

\bigskip
\noindent\textbf{2. Static Length Penalty Causes Premature Compression.}

Consider a fixed penalty $\lambda>0$, yielding reward
\begin{equation}
J_{\text{static}}(L) = \mathrm{Acc}(L) - \lambda L.
\end{equation}
The optimal length $L_s$ under this objective satisfies
\begin{equation}
\frac{d}{dL}\,\mathrm{Acc}(L)\big|_{L=L_s} = \lambda.
\end{equation}
Since $\frac{d}{dL}\,\mathrm{Acc}(L)$ vanishes for $L \ge L^\star$, any $\lambda>0$ forces $L_s < L^\star$, implying
\begin{equation}
\mathrm{Acc}(L_s) < \mathrm{Acc}(L^\star).
\end{equation}
Thus the model truncates its CoT before accuracy has fully converged.

\paragraph{Takeaway.} Static penalties enforce efficiency too early, sacrificing potential accuracy gains.

\bigskip
\noindent\textbf{3. Dynamic Penalty Supports a Two-Phase Curriculum.}

We introduce a time-dependent penalty $\lambda_t$:
\begin{equation}
\lambda_t =
\begin{cases}
\gamma, & t < t_0 \quad (\text{exploration phase}),\\[4pt]
\alpha\,(t - t_0), & t \ge t_0 \quad (\text{compression phase}),
\end{cases}
\end{equation}
where $\gamma < 0$, and $t_0$ is the step at which validation accuracy stabilizes, i.e.,
\begin{equation}
\Delta \mathrm{Acc}_t = \frac{\mathrm{Acc}_t - \mathrm{Acc}_{t-\Delta}}{\Delta} < \beta.
\end{equation}

\paragraph{Phase I (Exploration).}  
During early training, we set $\lambda_t < 0$, effectively turning the penalty into a bonus:
\[
J(L) = \mathrm{Acc}(L) - \lambda_t L,
\quad \text{with } -\lambda_t > 0,
\]
which encourages longer outputs. Since $\mathrm{Acc}(L)$ increases with $L$ up to $L^\star$, this promotes
\begin{equation}
L_t \to L^\star,
\qquad
\mathrm{Acc}_t \to \mathrm{Acc}(L^\star).
\end{equation}

\paragraph{Phase II (Compression).}  
As training progresses, $\lambda_t$ transitions from negative to positive and increases gradually.  
When $\lambda_t > 0$, the derivative of the reward at $L^\star$ is
\begin{equation}
\begin{split}
\left.\frac{d}{dL}\bigl[\mathrm{Acc}(L)-\lambda_t L\bigr]\right|_{L=L^\star}
= \\\frac{d}{dL}\,\mathrm{Acc}(L^\star) - \lambda_t < 0,
\end{split}
\end{equation}
so extending beyond $L^\star$ reduces reward. The policy thus shortens to a new equilibrium $L_d$:
\begin{gather}
\frac{d}{dL}\,\mathrm{Acc}(L)\big|_{L=L_d} = \lambda_t,
\quad L_d < L^\star,\\
\mathrm{Acc}(L_d)\approx \mathrm{Acc}(L^\star).
\end{gather}

\paragraph{Comparison to Static Penalty.}  
The final dynamic reward is
\begin{equation}
J_{\text{dyn}} = \mathrm{Acc}(L_d) - \lambda_T L_d,
\end{equation}
and under the concavity of $\mathrm{Acc}(L)$ one can show
\begin{equation}
\begin{split}
J_{\text{dyn}} - J_{\text{static}}
= \bigl[\mathrm{Acc}(L_d)-\mathrm{Acc}(L_s)\bigr] \\
- \lambda_T (L_d - L_s) \ge 0,
\end{split}
\end{equation}
i.e., dynamic scheduling yields no worse and often strictly better reward.  
This holds because for concave functions
\begin{equation}
\mathrm{Acc}(L_d) - \mathrm{Acc}(L_s) \;\ge\; \mathrm{Acc}'(L_d) (L_d - L_s),
\end{equation}
and with $\mathrm{Acc}'(L_d) = \lambda_T$, the inequality follows.

\paragraph{Efficiency Metric.}  
Define length-normalized accuracy
\begin{equation}
\mathrm{L\text{-}Acc}(L) = \mathrm{Acc}(L)\,\sqrt{1 - \frac{L}{L_{\max}}}.
\end{equation}
In practice, dynamic scheduling often achieves similar or higher accuracy with shorter or comparable length, leading to
\begin{equation}
\mathrm{L\text{-}Acc}(L_d) > \mathrm{L\text{-}Acc}(L_s).
\end{equation}

\bigskip
\noindent\textbf{Conclusion.}  
Our dynamic length reward realizes the curriculum
\begin{equation}
\begin{split}
\text{explore freely }(\lambda \leq 0)
\;\longrightarrow\;
\text{accuracy convergence} \\
\;\longrightarrow\;
\text{gradual compression }(\lambda\uparrow).
\end{split}
\end{equation}
This schedule lets the model reach its accuracy ceiling $\mathrm{Acc}(L^\star)$ before enforcing brevity, achieving better accuracy–efficiency trade-offs than static schemes.

\section{Definition and Theoretical Analysis of Length-normalized Accuracy}
\label{ap:l-acc}

\noindent
\textbf{Length-Normalized Accuracy.}

To evaluate reasoning efficiency, we adopt a length-normalized accuracy metric, denoted as \textsc{L-Acc}, which balances correctness with brevity. Formally, it is defined as:
\begin{equation}
\mathrm{L\text{-}Acc} = \mathrm{Acc} \times \sqrt{1 - \frac{L}{L_{\max}}}, \label{eq:len_acc}
\end{equation}
where $\mathrm{Acc} \in [0,1]$ denotes exact-match accuracy, $L$ is the number of tokens in the model’s response, and $L_{\max}$ is a dataset-specific upper bound on response length. The multiplicative factor penalizes longer outputs in a sub-linear manner, rewarding models that solve problems with fewer tokens.

\begin{figure}[t]
  \centering
  \includegraphics[width=0.48\textwidth]{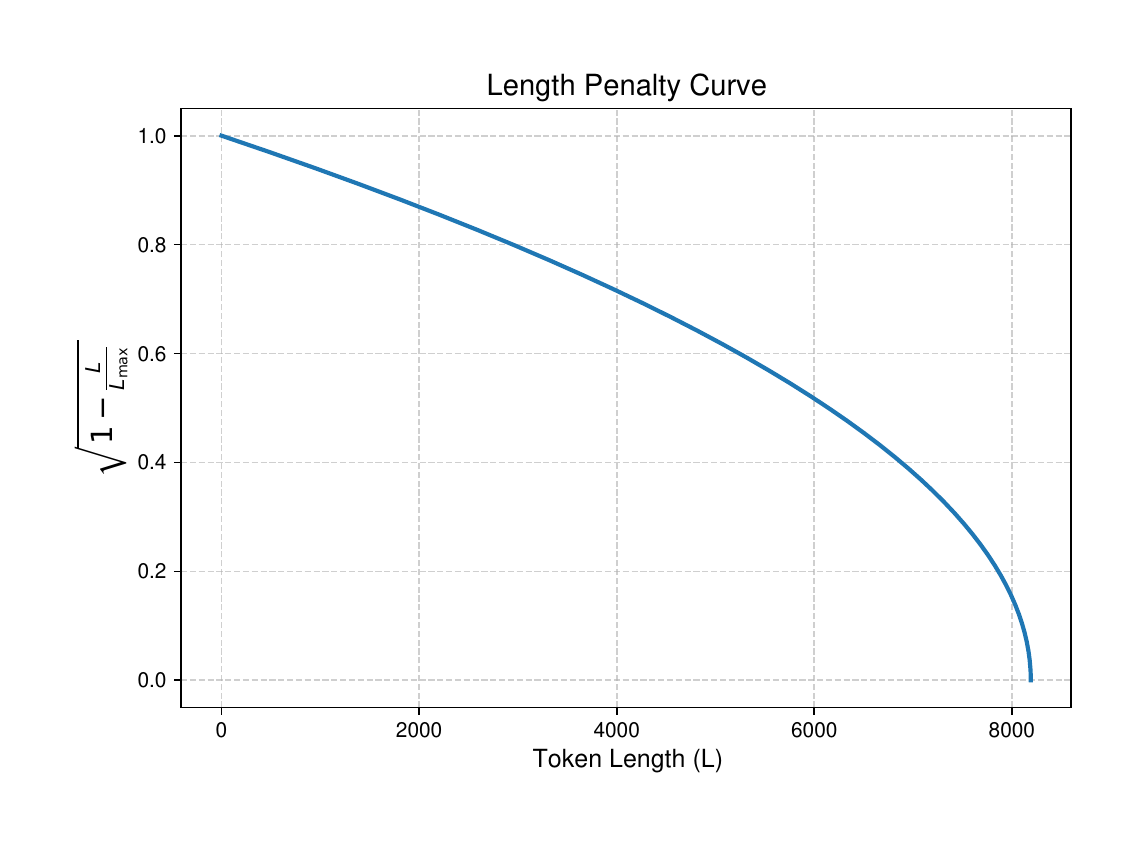}
  \caption{\textbf{Penalty curve:} $\sqrt{1 - \frac{L}{L_{\max}}}$.}
  \label{fig:lacc}
\end{figure}

Specifically, we set \(L_{\max}=8192\) for the two \textit{DeepSeek}-based reasoning models, and \(L_{\max}=3000\) for the \textit{Qwen2.5-Math-7B-Instruct} model, since reasoning-oriented models generally generate longer outputs than instruction-tuned models. The multiplicative factor \(\sqrt{1-\frac{L}{L_{\max}}}\) weights accuracy by a sub-linear penalty on sequence length, so the metric rewards correct solutions that
are delivered with fewer tokens. Normalizing by \(L_{\max}\) makes the score
comparable across datasets of very different scale, while the square-root ensures a smooth, continuous trade-off: the first tokens cut away improve the score more than later ones, mirroring human tolerance for moderate verbosity but aversion to extreme length. When \(L=L_{\max}\) the metric collapses to zero, preventing models from exchanging unbounded length for marginal accuracy gains; when \(L=0\) it reduces to the raw accuracy, preserving credit for perfectly concise answers.

\noindent
\textbf{Penalty Behavior and Physical Intuition.}
The penalty term $\sqrt{1 - \frac{L}{L_{\max}}}$ is continuous, monotonically decreasing, and bounded between 0 and 1. It applies no penalty when $L = 0$, and reduces the reward to zero when $L = L_{\max}$, even if the answer is correct. Crucially, the square-root form introduces diminishing returns: trimming early redundant tokens provides larger gains in \textsc{L-Acc} than removing tokens later in the sequence. This design mirrors human preferences—we tolerate moderate verbosity, but disfavor excessive detail. It also echoes the behavior of L2 regularization, where larger values are penalized more aggressively, while smaller deviations are softly constrained.

\noindent
\textbf{Gradient Analysis.}
To understand its optimization implications, we analyze the gradient of the penalty term with respect to $L$:
\begin{equation}
\frac{d}{dL} \left( \sqrt{1 - \frac{L}{L_{\max}}} \right) = -\frac{1}{2L_{\max}} \cdot \left(1 - \frac{L}{L_{\max}}\right)^{-1/2}.
\end{equation}
This derivative diverges as $L \to L_{\max}$, indicating that long outputs are heavily penalized. In contrast, when $L$ is small, the gradient approaches zero, and the penalty becomes negligible. This behavior encourages models to first eliminate highly redundant tokens, while maintaining stability for shorter outputs.

\noindent
\textbf{Optimization Benefits.}
Unlike hard constraints on length, this formulation yields a smooth and differentiable reward signal, making it well-suited for reinforcement learning algorithms such as PPO. It provides stable guidance throughout training and enables the model to trade off between accuracy and length in a controlled and interpretable manner. As shown in Figure~\ref{fig:lacc}, the penalty curve strongly discourages excessively long outputs while allowing flexibility in moderately verbose cases, contributing to more efficient and human-aligned reasoning. Notably, the curve becomes steep as the response length approaches \(L_{\max}\), meaning that small increases in length lead to sharp drops in reward; conversely, it flattens near \(L = 0\), where changes in length have only a minor effect on the reward. This property ensures that the model is heavily penalized for extreme verbosity while remaining tolerant of brief explanatory content.

\section{Detailed Settings of Experiments}
\label{ap:detail_setting}

\noindent\textbf{Prompt.} 
All experiments use the prompt: \texttt{"Let's think step by step and output the final answer within \textbackslash boxed\{\}."}

\noindent\textbf{Models.} 
Our experiments involve a mix of proprietary and open-source models. The models evaluated in this study include:

\begin{itemize}[leftmargin=20pt, itemsep=0pt, labelsep=5pt, topsep=0pt]
    \item \textbf{DeepSeek-R1-Distill-Qwen-1.5B (MIT License)}: A fine-tuned model with 1.5 billion parameters, used to evaluate the proposed method.
    \item \textbf{DeepSeek-R1-Distill-Qwen-7B (MIT License)}: A fine-tuned model with 7 billion parameters, also used to evaluate the proposed method.
    \item \textbf{Qwen2.5-Math-7B-Instruct (Apache-2.0 License)}: An instruction-tuned model with 7 billion parameters, used to further assess the efficiency and accuracy in reasoning tasks.
\end{itemize}

\noindent\textbf{Datasets.} 
We evaluate our models on several datasets covering both in-distribution (ID) and out-of-distribution (OOD) tasks. The evaluation framework encompasses:

\begin{itemize}[leftmargin=20pt, itemsep=0pt, labelsep=5pt, topsep=0pt]
  \item \textbf{MATH}~\citep{hendrycksmath2021}: A comprehensive training dataset containing 7,500 mathematical problems across various difficulty levels and topics.
  \item \textbf{MATH500}~\citep{hendrycksmath2021}: A carefully selected 500-problem subset from the MATH test set, serving as our primary in-distribution evaluation benchmark.
  \item \textbf{GSM8K}~\citep{cobbe2021training}: Grade school math word problems requiring multi-step reasoning, used for out-of-distribution evaluation on elementary-level mathematics.
  \item \textbf{TheoremQA}~\citep{chen-etal-2023-theoremqa}: A challenging dataset requiring theorem application and symbolic reasoning across STEM domains, used for out-of-distribution evaluation.
  \item \textbf{AIME2024}~\citep{aime_1983_2024}: Problems from the prestigious American Invitational Mathematics Examination, representing the most challenging out-of-distribution evaluation.
\end{itemize}

These datasets are arranged in increasing order of difficulty: \textsc{GSM8K} $<$ \textsc{MATH500} $<$ \textsc{TheoremQA} $<$ \textsc{AIME2024}, offering a comprehensive evaluation of models' reasoning capabilities across varying complexity levels, as summarized in Table~\ref{tab:datasets}.

\begin{table*}[ht]
\centering
\caption{Overview of datasets used for training and evaluation.}
\resizebox{\textwidth}{!}{%
\begin{tabular}{llrrcccc}
\toprule
\textbf{Type} & \textbf{Dataset} & \textbf{\# Train} & \textbf{\# Test} & \textbf{Domain} & \textbf{Task Type} & \textbf{Difficulty} & \textbf{Source} \\
\midrule
Training & MATH~\citep{hendrycksmath2021} & 7,500 & -- & Mathematics & Problem Solving & Mixed & \href{https://huggingface.co/datasets/DigitalLearningGmbH/MATH-lighteval}{Link} \\
\midrule
ID Test & MATH500~\citep{hendrycksmath2021} & -- & 500 & Mathematics & Problem Solving & Medium-Hard & \href{https://huggingface.co/datasets/HuggingFaceH4/MATH-500}{Link} \\
\midrule
\multirow{3}{*}{OOD Test} 
  & GSM8K~\citep{cobbe2021training} & -- & 1,319 & Elementary Math & Word Problems & Easy & \href{https://huggingface.co/datasets/openai/gsm8k}{Link} \\
  & TheoremQA~\citep{chen-etal-2023-theoremqa} & -- & 800 & STEM & Theorem Application & Hard & \href{https://huggingface.co/datasets/TIGER-Lab/TheoremQA}{Link} \\
  & AIME2024~\citep{aime_1983_2024} & -- & 30 & Competition Math & Advanced Problem Solving & Very Hard & \href{https://huggingface.co/datasets/sea-snell/aime-2024}{Link} \\
\bottomrule
\end{tabular}
} 
\label{tab:datasets}
\end{table*}

\noindent\textbf{Preprocessing and Tokenization.} 
Each model uses its corresponding tokenizer to process the input sequences. Tokenization ensures compatibility with the model's input structure, using special tokens to denote the start and end of sequences. 



\noindent\textbf{Training Procedure.} 
All models are trained for a total of 50 epochs using the Proximal Policy Optimization (PPO) algorithm, optimizing for both accuracy and efficiency. The actor and critic models are initialized with the same parameters, and training is conducted with the following hyperparameters: actor learning rate $=5\times10^{-5}$, critic learning rate $=1\times10^{-6}$, mini-batch size $=512$, and KL-divergence coefficient $=0.001$. Evaluation is performed periodically at every training step to monitor progress, and the best model checkpoints are selected for final testing. 

\noindent\textbf{Decoding Configurations.}
We conduct both training and evaluation under carefully controlled decoding settings. During training, we adopt sampling generation with temperature $=1.0$, $\text{top\_k}=-1$, and $\text{top\_p}=1.0$ to encourage exploration, following the default configuration of the \textsc{VERL} framework for comparability with prior work. A single response ($n=1$) is generated per prompt, with the maximum prompt length capped at $1{,}024$ tokens for efficiency. The maximum response length is set to $8{,}192$ tokens for DeepSeek models and $3{,}000$ tokens for Qwen-Math models.  

For evaluation, we emphasize efficiency and stability by adopting greedy decoding, consistent with \textsc{VERL} defaults and prior studies \citep{yeo2025demystifyinglongchainofthoughtreasoning, cui2025entropy}. Specifically, evaluation uses greedy decoding with temperature $=0$, and one response per input ($n=1$). The same maximum response lengths as in training are applied ($8{,}192$ for DeepSeek, $3{,}000$ for Qwen-Math). We also conducted experiments under extended sampling configurations, with comprehensive results presented in Appendix~\ref{ap:sampling}.

\noindent\textbf{Evaluation Metrics.} 
We evaluate the models using the following metrics:

\begin{itemize}[leftmargin=20pt, itemsep=0pt, labelsep=5pt, topsep=0pt]
  \item \textbf{Exact Match (EM)}: Measures the proportion of exact matches between the generated output and the ground-truth answer.
  \item \textbf{Response Length (Len)}: Measures the number of tokens in the output sequence.
  \item \textbf{Length-Normalized Accuracy (L-Acc)}: A metric that balances accuracy and efficiency by considering both correctness and response length.
\end{itemize}

\noindent\textbf{Baselines.} 
We compare the \textsc{Bingo} framework with the following baselines:
\begin{itemize}[leftmargin=20pt, itemsep=0pt, labelsep=5pt, topsep=0pt]
  \item \textbf{DAST}~\citep{shi2025dastdifficultyadaptiveslowthinking}: Uses dynamic length penalties based on problem difficulty.
  \item \textbf{Efficient Reasoning}~\citep{arora2025traininglanguagemodels}: Scales down positive rewards to encourage brevity.
  \item \textbf{Kimi-k1.5}~\citep{kimiteam2025kimik15scalingreinforcement}: Applies online length penalties.
  \item \textbf{O1-Pruner}~\citep{luo2025o1prunerlengthharmonizingfinetuningo1like}: Applies offline length penalties based on length comparisons with reference sequences.
  \item \textbf{Demystifying}~\citep{yeo2025demystifyinglongchainofthoughtreasoning}: Applies a symmetric penalty strategy for response lengths, encouraging both shorter and more extensive reasoning depending on correctness.
\end{itemize}

Rather than directly comparing published baselines—which employ diverse frameworks (e.g., SimPO, GRPO) and differ in their on-policy versus off-policy implementations—we isolate and re-implement only the length-based reward components proposed in each work. All these reward designs are integrated into a unified PPO framework. This approach enables a fair comparison focused specifically on the effectiveness of different reward formulations for improving reasoning efficiency.

\noindent\textbf{Software and Hardware.} 
The experiments are conducted with Python 3.11, PyTorch v2.4.0, and CUDA 12.8 for model training and inference. We use 4 NVIDIA A100 80GB PCIe GPUs for training the 7B model and 2 NVIDIA H100 80GB PCIe GPUs for training the 1.5B model. For inference, 2 NVIDIA H100 80GB PCIe GPUs are used to accelerate processing.

\begin{table*}[ht]
\centering
\caption{\textbf{Performance comparison under sampling decoding settings.} 
Each method is evaluated using DeepSeek-R1-Distill-Qwen-1.5B as the base model with sampling parameters (32,768 token limit, 3 samples, temperature = 0.6, top-p = 1.0). Metrics include answer accuracy (Acc, \%), response length (Len), and length-normalized accuracy (L-Acc, \%). The best performance is highlighted in \textcolor{langdarkblue}{dark blue}, and the second-best in \textcolor{langwildblue}{light blue}.}
\resizebox{\textwidth}{!}{%
\begin{tabular}{llcccccccccccc}
\toprule
\multicolumn{2}{l}{\textbf{Method}} 
& \multicolumn{3}{c}{\textbf{MATH500}} 
& \multicolumn{3}{c}{\textbf{GSM8K}} 
& \multicolumn{3}{c}{\textbf{TheoremQA}} 
& \multicolumn{3}{c}{\textbf{AIME2024}} \\
\cmidrule(lr){3-5} \cmidrule(lr){6-8} \cmidrule(lr){9-11} \cmidrule(lr){12-14}
& & Acc$\uparrow$ & Len$\downarrow$ & L-Acc$\uparrow$ 
    & Acc$\uparrow$ & Len$\downarrow$ & L-Acc$\uparrow$ 
    & Acc$\uparrow$ & Len$\downarrow$ & L-Acc$\uparrow$ 
    & Acc$\uparrow$ & Len$\downarrow$ & L-Acc$\uparrow$ \\
\midrule
& Base             & 81.6 & 5,155 & 74.9 & 83.7 & 1,748 & 81.4 & 31.7 & 7,598 & 27.8 & 17.8 & 15,703 & 12.8 \\
& Vanilla PPO      & 82.3 & 2,694 & 78.8 & 86.5 & 1,050 & 85.1 & 33.4 & 3,616 & 31.5 & 28.9 & 7,389 & 25.4 \\
& O1-Pruner        & 80.1 & 1,283 & 78.5 & 85.2 & \textcolor{langwildblue}{\textbf{352}} & 84.7 & 34.7 & \textcolor{langwildblue}{\textbf{1,095}} & 34.1 & 28.9 & 4,636 & 26.8 \\
& Demystifying     & 81.3 & 1,945 & 78.9 & 86.8 & 483 & 86.2 & 35.2 & 1,863 & 34.2 & 30.0 & 5,891 & 27.2 \\
& DAST             & 83.5 & 2,053 & 80.8 & 84.1 & 375 & 83.6 & 35.4 & 2,954 & 33.8 & 36.7 & 5,072 & 33.7 \\
\midrule
\multicolumn{2}{l}{\textbf{\textit{Bingo (Ours)}}} & & & & & & & & & & & & \\
& \textbf{Bingo-A} 
    & \textcolor{langdarkblue}{\textbf{85.1}} 
    & \textcolor{langwildblue}{\textbf{1,114}} 
    & \textcolor{langdarkblue}{\textbf{83.6}} 
    & \textcolor{langdarkblue}{\textbf{88.4}} 
    & 483 
    & \textcolor{langwildblue}{\textbf{87.7}} 
    & \textcolor{langdarkblue}{\textbf{37.9}} 
    & 1,592 
    & \textcolor{langwildblue}{\textbf{37.0}} 
    & \textcolor{langdarkblue}{\textbf{38.9}} 
    & \textcolor{langdarkblue}{\textbf{3,110}} 
    & \textcolor{langdarkblue}{\textbf{37.0}} \\
& \textbf{Bingo-E} 
    & \textcolor{langwildblue}{\textbf{84.2}} 
    & \textcolor{langdarkblue}{\textbf{983}} 
    & \textcolor{langwildblue}{\textbf{82.9}} 
    & \textcolor{langwildblue}{\textbf{88.1}} 
    & \textcolor{langdarkblue}{\textbf{217}} 
    & \textcolor{langdarkblue}{\textbf{87.8}} 
    & \textcolor{langwildblue}{\textbf{37.7}} 
    & \textcolor{langdarkblue}{\textbf{1,004}} 
    & \textcolor{langdarkblue}{\textbf{37.1}} 
    & \textcolor{langwildblue}{\textbf{37.6}} 
    & \textcolor{langwildblue}{\textbf{2,817}} 
    & \textcolor{langwildblue}{\textbf{35.9}} \\
\bottomrule
\end{tabular}%
}
\label{tab:sampling}
\vspace{-0.7em}
\end{table*}

\begin{figure*}[h]
  \centering
  \begin{subfigure}[t]{0.43\textwidth}
    \centering
    \includegraphics[height=120pt]{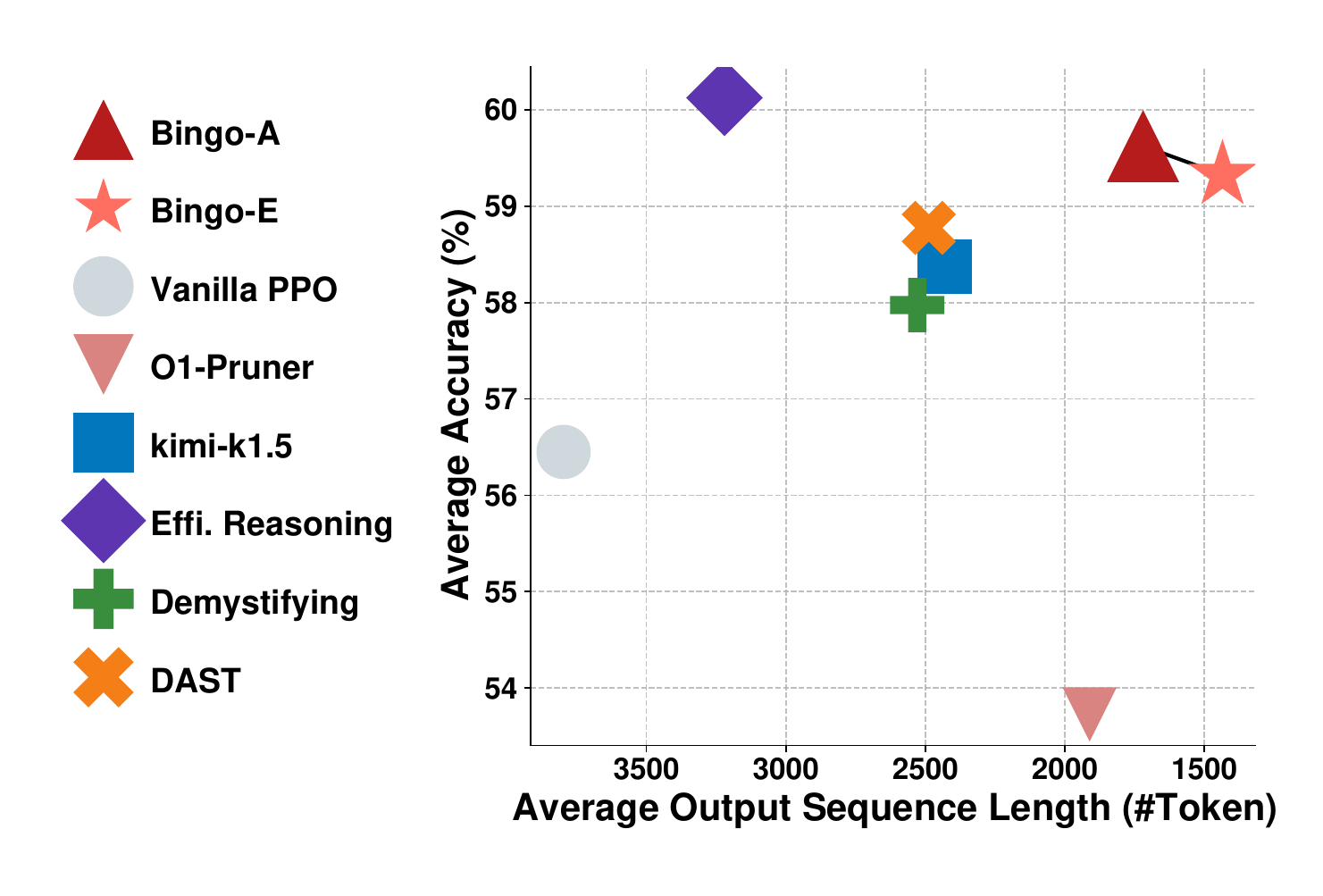}
    \label{fig:teaser_scatter}
  \end{subfigure}
  \begin{subfigure}[t]{0.5\textwidth}
    \centering
    \includegraphics[height=120pt]{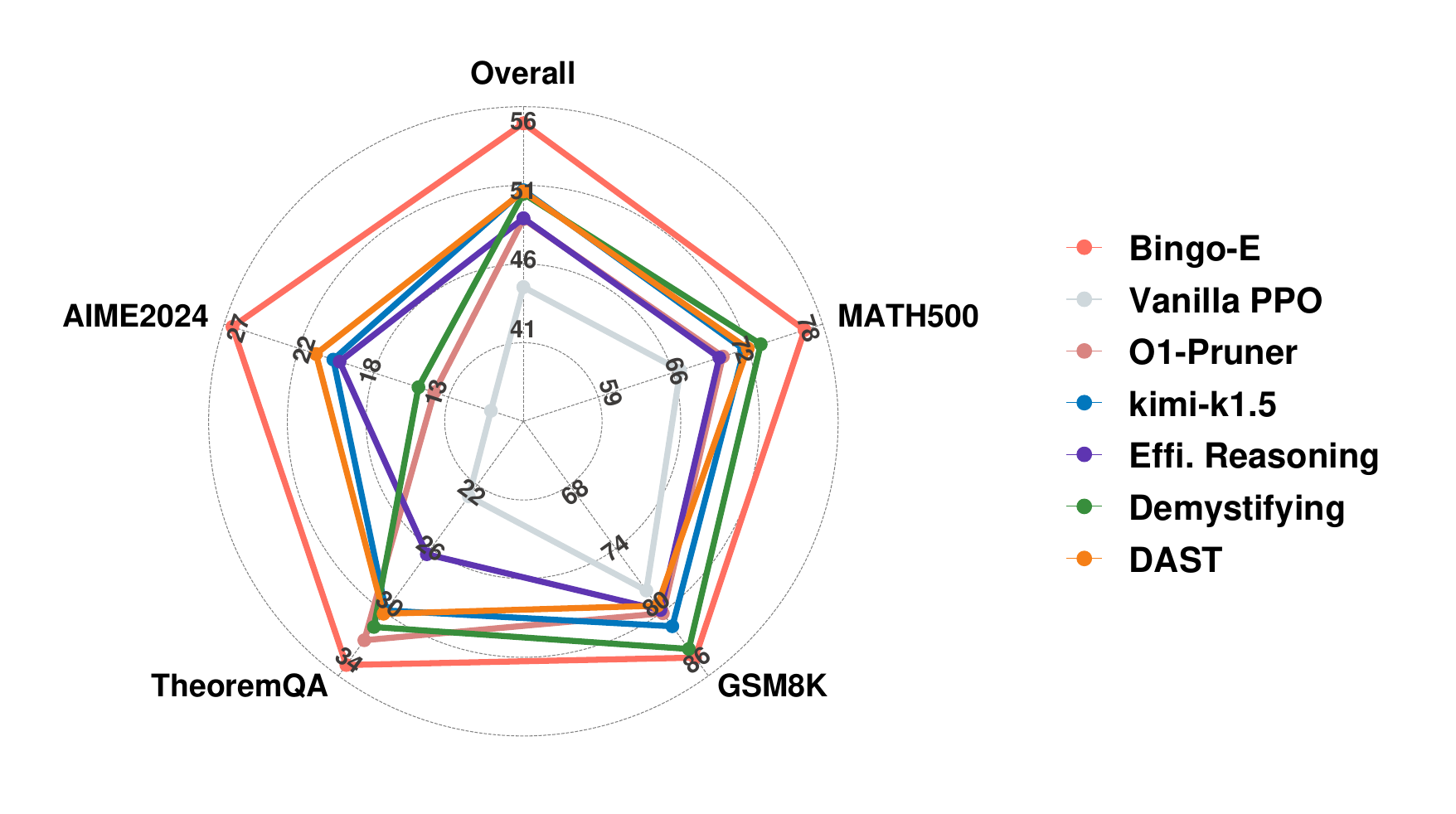}
    \label{fig:teaser_radar}
  \end{subfigure}
  \vspace{-1.2em}
  \caption{\textbf{Performance overview of \textsc{Bingo} and other baselines.} \textbf{\textit{Left}}: Scatter plot of average accuracy versus average response length on four benchmarks (MATH500, GSM8K, TheoremQA, AIME2024) using DeepSeek-R1-Distill-Qwen-1.5B as the base model. Points nearer the top‑right corner represent a better balance of accuracy and efficiency. \textbf{\textit{Right}}: Radar chart of length‑normalized accuracy for each method. Greater radial distances denote higher efficiency.}
  \label{fig:teaser_baseline}
\end{figure*}

\section{Performance under Extended Sampling Settings}
\label{ap:sampling}

We conducted additional experiments using sampling decoding to assess the robustness of our approach under more exploratory conditions. These experiments employed an extended configuration with a 32,768-token output limit, three samples per prompt, temperature of 0.6, and top-p of 1.0. We evaluated the base DeepSeek-R1-Distill-Qwen-1.5B model, vanilla PPO, our proposed Bingo method, and selected competitive baselines to ensure comprehensive comparison.

Table~\ref{tab:sampling} presents the accuracy and average response length across four benchmarks under these sampling conditions. The results demonstrate that Bingo maintains its efficiency advantage even with sampling decoding, achieving strong accuracy while generating substantially shorter outputs than all baseline methods. This finding confirms that our reward design effectively promotes concise reasoning regardless of the decoding strategy employed.

Although these extended settings yielded accuracy improvements, they required approximately five times the computational resources and training time compared to greedy decoding. Given this substantial computational overhead, we selected single-response greedy decoding as our primary evaluation protocol to maintain experimental feasibility while still providing meaningful performance assessments. The sampling results presented here validate that our approach remains effective under more computationally intensive conditions.

\section{Performance overview of \textsc{Bingo} and other baselines}
\label{ap:baseline}

Figure~\ref{fig:teaser_baseline} provides an overview of the performance of \textsc{Bingo} compared with strong baselines in terms of both accuracy and efficiency. The scatter plot shows that \textsc{Bingo} variants consistently occupy the favorable region with higher accuracy and substantially shorter response lengths, indicating a superior accuracy--efficiency trade-off. This advantage holds across both relatively simple benchmarks such as GSM8K and more challenging settings like TheoremQA, where \textsc{Bingo} achieves noticeable accuracy improvements while reducing response length by more than half. The radar chart further confirms these trends, with \textsc{Bingo} attaining the highest length-normalized accuracy across datasets, demonstrating its ability to generate concise yet effective reasoning across diverse difficulty levels.

\section{Performance across Different Reinforcement Learning Algorithms}
\label{ap:rl}
\begin{table*}[t]
\centering
\scriptsize
\setlength{\tabcolsep}{4pt}
\caption{\textbf{Comparison of reinforcement learning algorithms on four reasoning benchmarks.} 
Each method is evaluated using DeepSeek-R1-Distill-Qwen-1.5B as the base model by answer accuracy (Acc, \%), response length (Len), and length-normalized accuracy (L-Acc, \%). Bingo-based variants consistently outperform their vanilla counterparts across different RL optimizers (PPO, RLOO, GRPO, Reinforce++). Numbers in parentheses show the L-Acc gain over the corresponding vanilla baseline, with \textcolor{langgreen}{\textbf{green}} indicating improvement.}
\resizebox{\textwidth}{!}{%
\begin{tabular}{lccccccccccccc}
\toprule
\multicolumn{1}{l}{\textbf{Method}}
  & \multicolumn{3}{c}{\textbf{MATH500}}
  & \multicolumn{3}{c}{\textbf{GSM8K}}
  & \multicolumn{3}{c}{\textbf{TheoremQA}}
  & \multicolumn{3}{c}{\textbf{AIME2024}} \\
\cmidrule(lr){2-4} \cmidrule(lr){5-7} \cmidrule(lr){8-10} \cmidrule(lr){11-13}
  & Acc$\uparrow$ & Len$\downarrow$ & L-Acc$\uparrow$
  & Acc$\uparrow$ & Len$\downarrow$ & L-Acc$\uparrow$
  & Acc$\uparrow$ & Len$\downarrow$ & L-Acc$\uparrow$
  & Acc$\uparrow$ & Len$\downarrow$ & L-Acc$\uparrow$ \\
\midrule
Base
  & 63.2 & 3,913 & 45.7  
  & 73.2 & 2,025 & 63.5  
  & 18.7 & 5,741 & 10.3  
  & 16.7 & 7,027 & 6.3  \\
\midrule
Vanilla PPO      
  & 81.4 & 2,771 & 66.2  
  & 85.4 & 1,310 & 78.2  
  & 32.3 & 4,146 & 22.7  
  & 26.7 & 6,961 & 10.3 \\
\textbf{Bingo-PPO}
  & 82.2 &   894 & 77.6 \textcolor{langgreen}{\textbf{(+11.4)}}
  & 87.0 &   563 & 83.9 \textcolor{langgreen}{\textbf{(+5.7)}}
  & 36.8 & 1,648 & 32.9 \textcolor{langgreen}{\textbf{(+10.2)}}
  & 33.3 & 2,943 & 26.7 \textcolor{langgreen}{\textbf{(+16.4)}} \\
\midrule
Vanilla RLOO
  & 76.8 & 2,413 & 64.5  
  & 77.3 & 1,588 & 69.4  
  & 30.0 & 3,162 & 23.5  
  & 26.7 & 6,025 & 13.7 \\
\textbf{Bingo-RLOO}
  & 78.0 & 1,985 & 67.9 \textcolor{langgreen}{\textbf{(+3.4)}}
  & 80.7 &   450 & 78.5 \textcolor{langgreen}{\textbf{(+9.1)}}
  & 32.0 & 2,230 & 27.3 \textcolor{langgreen}{\textbf{(+3.8)}}
  & 33.3 & 5,583 & 18.8 \textcolor{langgreen}{\textbf{(+5.1)}} \\
\midrule
Vanilla GRPO
  & 76.4 & 2,533 & 63.5  
  & 77.8 &   804 & 73.9  
  & 29.2 & 2,946 & 23.4  
  & 26.7 & 6,096 & 13.5 \\
\textbf{Bingo-GRPO}
  & 79.4 & 1,753 & 70.4 \textcolor{langgreen}{\textbf{(+6.9)}}
  & 80.0 &   449 & 77.8 \textcolor{langgreen}{\textbf{(+3.9)}}
  & 31.9 & 2,298 & 27.0 \textcolor{langgreen}{\textbf{(+3.6)}}
  & 30.0 & 5,886 & 15.9 \textcolor{langgreen}{\textbf{(+2.4)}} \\
\midrule
Vanilla Reinforce++
  & 76.2 & 2,842 & 61.6  
  & 82.0 & 1,291 & 75.2  
  & 28.0 & 3,977 & 20.1  
  & 30.0 & 6,168 & 14.9 \\
\textbf{Bingo-Reinforce++}
  & 78.4 & 2,070 & 67.8 \textcolor{langgreen}{\textbf{(+6.2)}}
  & 81.0 &   640 & 77.8 \textcolor{langgreen}{\textbf{(+2.6)}}
  & 33.1 & 2,566 & 27.4 \textcolor{langgreen}{\textbf{(+7.3)}}
  & 30.0 & 5,885 & 15.9 \textcolor{langgreen}{\textbf{(+1.0)}} \\
\bottomrule
\end{tabular}%
}
\label{tab:rl}
\end{table*}

To evaluate the generalizability of our reward design, we integrate it into multiple on-policy reinforcement learning (RL) algorithms, including PPO, RLOO, GRPO, and Reinforce++. As shown in Table~\ref{tab:rl}, Bingo-enhanced variants consistently outperform their vanilla counterparts across all four benchmarks in both accuracy and length-normalized accuracy (L-Acc). Among them, \textsc{Bingo-PPO} delivers the strongest overall performance, achieving the highest or second-highest scores on all datasets while substantially reducing output length. Moreover, the benefits of Bingo extend beyond PPO: each Bingo variant improves L-Acc over its baseline by a clear margin, demonstrating that our reward formulation generalizes well across different policy optimization strategies. These gains are observed not only on in-distribution datasets like \textsc{MATH500} and \textsc{GSM8K}, but also on more challenging out-of-distribution settings such as \textsc{AIME2024}, highlighting the robustness of our approach. Overall, the results confirm that a principled and learnable length-aware reward offers a consistent advantage across a variety of reasoning tasks and RL algorithms.

\begin{figure*}[t!]
    \centering
    \includegraphics[width=0.95\textwidth]{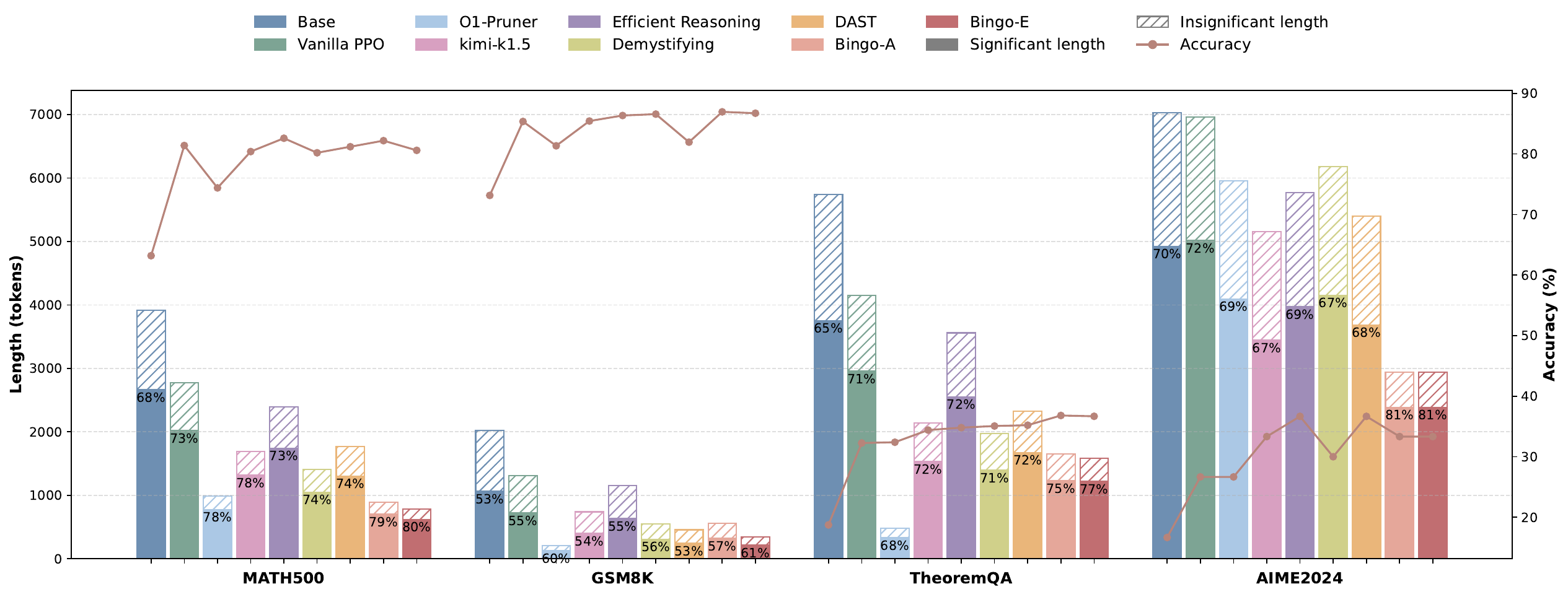}
    \caption{\textbf{Length–accuracy results for nine optimization algorithms on four datasets.} Bars show the number of tokens generated using DeepSeek-R1-Distill-Qwen-1.5B as the base model at the checkpoint that yields the reported accuracy (left axis). Each bar is partitioned into significant (dark) and insignificant (light) segments, and the percentage above the bar indicates the share of significant tokens. The solid line (right axis) gives the corresponding answer accuracy. Our methods, Bingo-A and Bingo-E, attain the highest accuracy while using the fewest tokens and achieving the greatest proportion of significant tokens, highlighting their superior reasoning efficiency.}
    \label{fig:len_acc_ratio}
\end{figure*}

\section{Analysis of Significant Token Ratio}
\label{ap:sig_token_ratio}

We trained \texttt{DeepSeek-R1-Distill-Qwen-1.5B} exclusively on the \textsc{MATH} corpus. To evaluate its generalization beyond the training distribution, we test the model on the in-distribution split \textsc{MATH500} as well as three out-of-distribution (OOD) benchmarks: \textsc{GSM8K}, \textsc{TheoremQA}, and \textsc{AIME2024}. Figure~\ref{fig:len_acc_ratio} shows that our approaches, \textbf{Bingo-A} and \textbf{Bingo-E}, achieve the most favorable length-accuracy trade-off across all four benchmarks.
\begin{itemize}[leftmargin=*,itemsep=0pt,topsep=2pt]
    \item \textbf{Efficiency at peak accuracy.} At the checkpoints that obtain their highest accuracy, both Bingo variants require only about \(\!20\%\) of the tokens used by the \emph{Base} model on \textsc{MATH500}, with similarly large reductions on \textsc{GSM8K}, \textsc{TheoremQA}, and \textsc{AIME2024}.
    \item \textbf{Preservation of informative content.} Bingo increases the share of significant tokens to \(75\text{--}81\%\), showing that the shortened rationales shed mainly redundant rather than essential reasoning steps.
    \item \textbf{Difficulty-dependent length trends.} Token counts grow with task difficulty: the two harder benchmarks, \textsc{TheoremQA} and \textsc{AIME2024}, demand considerably longer rationales and yield lower absolute accuracy than \textsc{MATH500} and \textsc{GSM8K}. Even under these tougher conditions, Bingo still delivers the highest accuracy while generating the fewest tokens.

    \item \textbf{Alleviating the length--accuracy trade-off.} Baselines that compress reasoning without accounting for token importance (e.g., \emph{O1-Pruner}) exhibit marked accuracy declines, whereas Bingo maintains---and in some cases slightly improves---task performance.
    \item \textbf{Robustness across tasks.} The same advantage holds for algebraic, commonsense, formal-logic, and competition-style benchmarks, underscoring the generality of the significance-aware and dynamic length rewards.
\end{itemize}
These findings confirm that explicitly modeling token significance and adaptively scheduling length rewards enables language models to reason both \emph{accurately} and \emph{efficiently}.

\begin{figure*}[t]
    \centering
    \resizebox{\textwidth}{!}{
    \fbox{\parbox[c]{1.1\textwidth}{
        \textbf{Problem: } Terry eats 2 yogurts a day. They are currently on sale at 4 yogurts for \$5.00. How much does he spend on yogurt over 30 days?
        \vskip 0.02in
        \textbf{Response of \textsc{Bingo}: } {\setlength{\fboxsep}{-1pt}
         \input{figures/bingo_token}
        }
        \vskip 0.02in
        \textbf{Response of Vanilla PPO: } {\setlength{\fboxsep}{-1pt}
         \input{figures/base_token}
        }
        \vskip 0.02in
        \textbf{Final Answer: } 75.
    }}}
    \caption{\textbf{Token-level significance visualization for a sample reasoning task.} 
Each token is colored based on its predicted significance: red indicates significant tokens (darker = more significant), and blue indicates insignificant tokens (darker = less significant). 
The response from \textsc{Bingo} (top) is shorter and more concentrated around meaningful reasoning steps, while the Vanilla PPO response (bottom) is longer and contains more exploratory and redundant language. 
The visualization illustrates how Bingo encourages more efficient and focused reasoning.}
    \label{fig:token-importance}
\end{figure*}

\section{Token-level Significance Visualization}
\label{ap:sig_visualization}

Figure \ref{fig:token-importance} provides a token-level significance visualization for a sample reasoning task. The problem involves calculating the cost of yogurt based on a given sale, and both the \textsc{Bingo} and Vanilla PPO models generate responses to solve the problem. Each token in the generated response is color-coded based on its predicted significance, with red indicating significant tokens and blue representing insignificant ones. Darker shades of red and blue correspond to higher significance levels.

The response from \textsc{Bingo} (top) is notably shorter and more focused on the key reasoning steps, highlighting the model's ability to concentrate on relevant tokens while avoiding unnecessary elaboration. In contrast, the Vanilla PPO response (bottom) is longer, with a higher proportion of redundant and less informative tokens, reflecting a less efficient reasoning process. This visualization clearly demonstrates how \textsc{Bingo} encourages more concise and targeted reasoning, optimizing for both accuracy and efficiency by emphasizing significant steps in the reasoning process.



\begin{figure*}[t]
    \centering
    \includegraphics[width=0.95\textwidth]{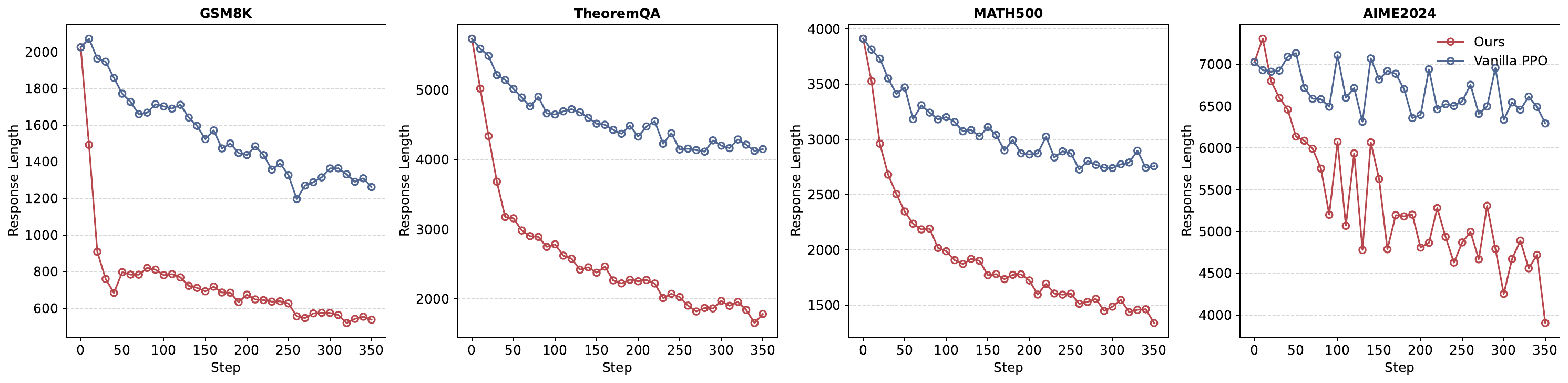}
    \caption{\textbf{Response length trends during training across four datasets.} The y-axis shows the number of tokens generated per response using DeepSeek-R1-Distill-Qwen-1.5B as the base model; the x-axis denotes training steps. The red line represents our method, and the blue line corresponds to Vanilla PPO.
Across all tasks, our method consistently produces shorter and more stable responses, demonstrating improved reasoning efficiency without compromising task performance.}
    \label{fig:length}    
\end{figure*}

\section{Analysis of Response Lengths Trends during Training}
\label{ap:response_trends}

Figure~\ref{fig:length} presents the evolution of response length over training steps for Vanilla PPO and our method on four benchmarks. Our approach consistently yields substantially shorter outputs than Vanilla PPO throughout training, demonstrating effective removal of redundant tokens, and converges more smoothly, reflecting robust length regularization. The reduction in response length is most pronounced on the more demanding tasks—\textsc{MATH500} and \textsc{AIME2024}—where Vanilla PPO produces very long sequences, yet our method maintains a compact reasoning footprint. Importantly, this improvement generalizes across diverse reasoning styles, from arithmetic problems in \textsc{GSM8K} and formal-logic questions in \textsc{TheoremQA} to academic and competition-style challenges, confirming that our reward design enhances reasoning efficiency without compromising training stability.

\section{Analysis of Response Lengths Dynamics for Correct vs. Wrong Samples}
\label{ap:response_c_w}

\begin{figure*}[t]
    \centering
    \includegraphics[width=0.95\textwidth]{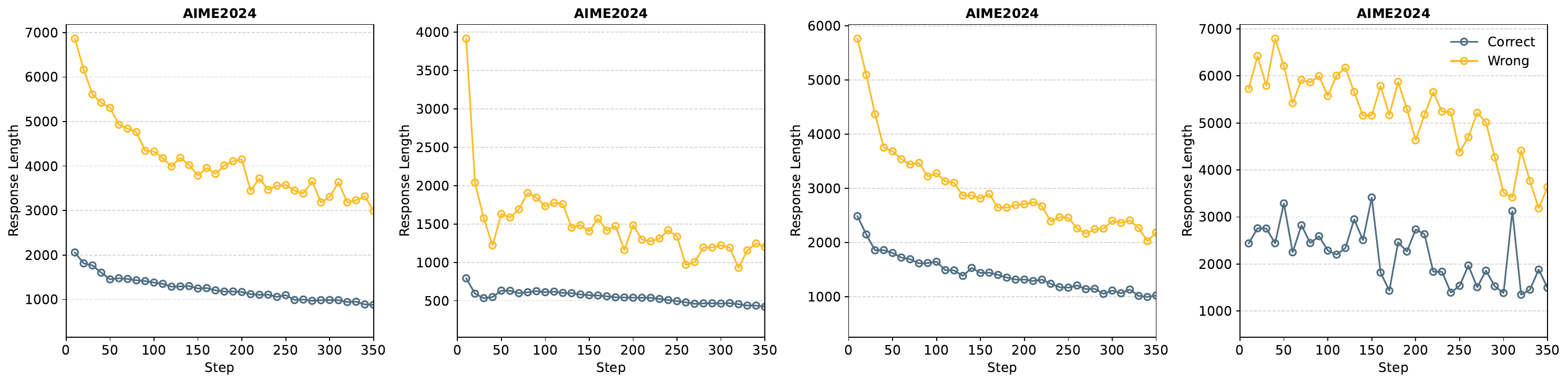}
    \caption{\textbf{Response length dynamics for correct vs. wrong samples during training.} The x‑axis indicates training steps, and the y‑axis denotes response length in tokens for models trained on DeepSeek-R1-Distill-Qwen-1.5B as the base model. The blue line tracks correctly answered samples, while the yellow line tracks incorrectly answered samples. In the early stages, incorrect samples produce substantially longer responses, reflecting the effect of our length‑incentive mechanism. As the dynamic length reward gradually diminishes, the response length for incorrect samples falls more sharply than that for correct samples, illustrating the model’s adaptive pruning of redundant reasoning steps.  
}
    \label{fig:correct_wrong}
\end{figure*}

Figure~\ref{fig:correct_wrong} illustrates how response length evolves for correct and incorrect samples under our approach. In the early phase of training, incorrect samples produce markedly longer outputs than correct ones, demonstrating the impact of our length‑incentive mechanism in promoting thorough exploration on challenging cases. As the dynamic length reward takes effect around mid‑training, the length for wrong samples declines steeply—outpacing the reduction seen for correct samples—and the gap between the two curves narrows. By later stages, both curves converge toward similarly concise rationales, indicating that the model has learned to apply efficient reasoning uniformly. This behavior confirms that our combination of significance‑aware and dynamic rewards not only drives exploration where needed but also enforces brevity once sufficient understanding is achieved, resulting in a balanced, adaptive pruning of redundant tokens.  

To examine how response length relates to answer correctness, we compare output length distributions of our method and the Vanilla PPO baseline across four benchmarks using \textit{DeepSeek-R1-Distill-Qwen-1.5B}. As shown in Figure~\ref{fig:len_hist}, correct responses consistently exhibit shorter lengths than incorrect ones across all tasks. Our method further produces sharply concentrated distributions for correct samples, suggesting more focused and efficient reasoning. In contrast, Vanilla PPO outputs are generally longer and more dispersed, with substantial overlap between correct and incorrect cases. Notably, the length of incorrect samples is substantially reduced compared to Vanilla PPO, suggesting that the dynamic reward mechanism—which gradually penalizes verbosity during training—plays a role in guiding more efficient responses.

\begin{figure*}[t]
    \centering
    \includegraphics[width=0.95\textwidth]{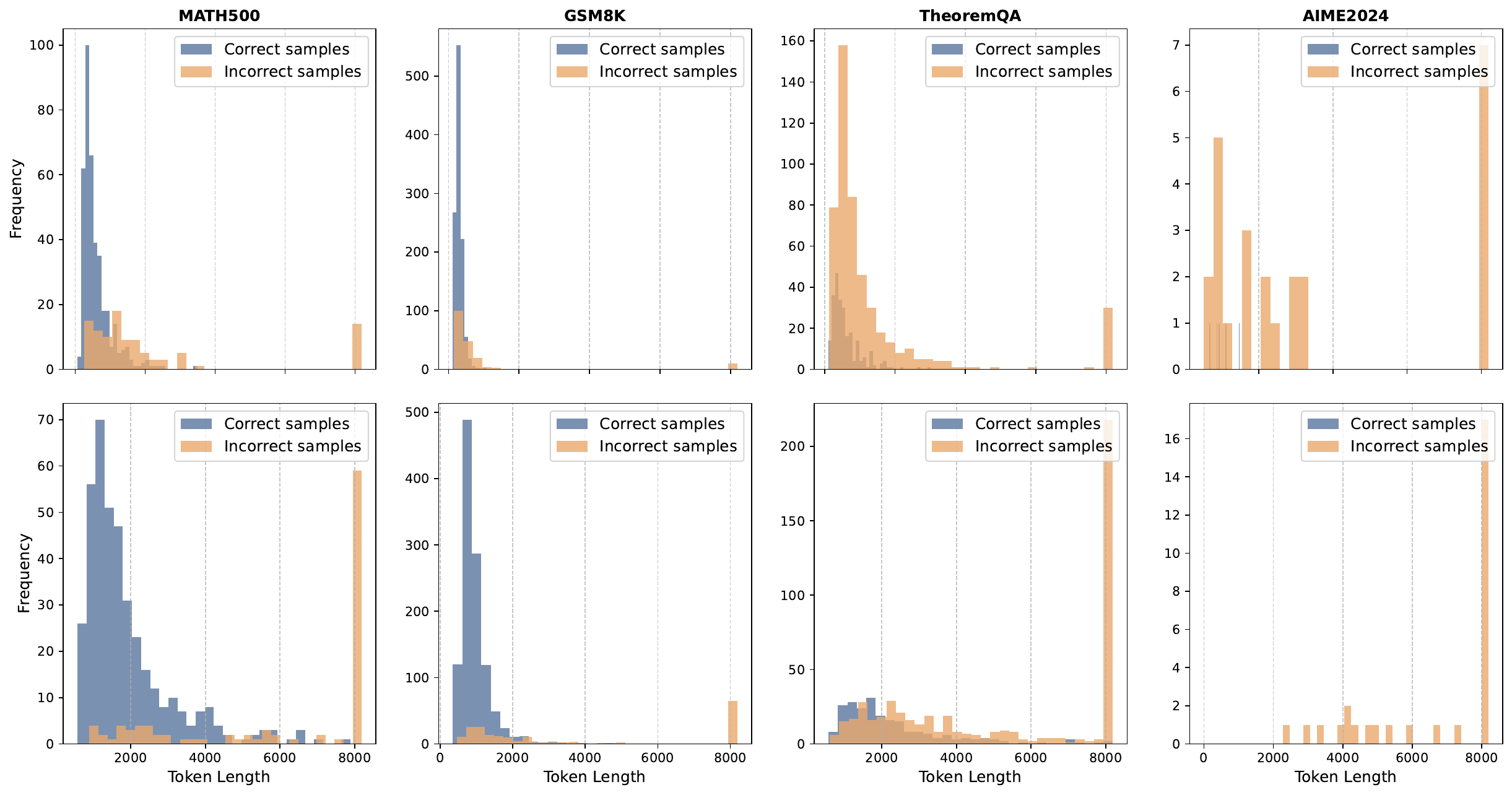}
    \caption{\textbf{Distribution of response lengths for correct vs.\ incorrect samples.} 
Histograms show the frequency of token lengths in model outputs across four benchmarks using DeepSeek-R1-Distill-Qwen-1.5B as the base model. 
Each plot compares correct responses (blue) and incorrect responses (orange). 
The top row corresponds to our method, while the bottom row shows results from Vanilla PPO. 
Across all datasets, incorrect samples are more likely to produce longer outputs, while correct samples tend to cluster in shorter length ranges. 
Compared to Vanilla PPO, our method produces a sharper, more compact distribution concentrated in shorter length regions.
}

    \label{fig:len_hist}
\end{figure*}

\section{Analysis of Incorrect Response Length under Different Reward Designs}
\label{ap:response_reward}

\begin{figure*}[t]
    \centering
    \includegraphics[width=0.95\textwidth]{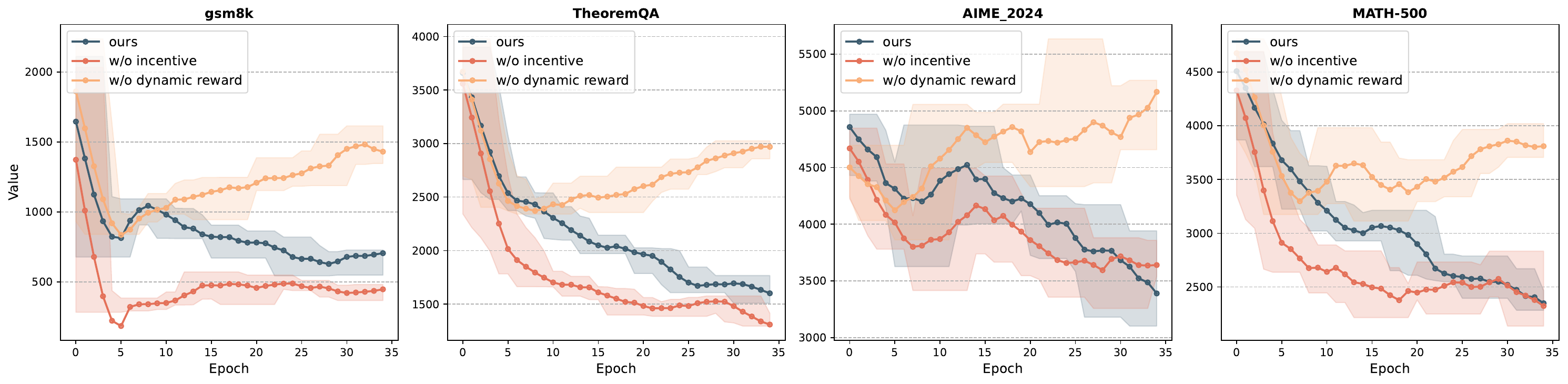}
    \caption{
\textbf{Effect of Reward Design on Incorrect Response Length.}
We visualize the average significant response length of incorrect predictions during training on four benchmarks using DeepSeek-R1-Distill-Qwen-1.5B as the base model. Compared to the variant without incentive, our full method produces longer responses for incorrect samples, suggesting that the significance-aware reward encourages more thorough exploration when the model is uncertain. In contrast, removing the dynamic reward leads to persistently longer outputs, whereas our full method shows a clear reduction in response length over time, confirming the effectiveness of dynamic reward scheduling in promoting concise reasoning. Together, these trends highlight the complementary roles of the two reward components in balancing exploration and efficiency.
}
    \label{fig:ap_ablation}
\end{figure*}

To gain deeper insights into the effect of our reward design during training, we analyze the significant response length of incorrect predictions—a proxy for the model’s exploratory behavior under uncertainty. As shown in Figure \ref{fig:ap_ablation}, our full method generates longer responses for incorrect samples compared to the version \textbf{without incentive} across all four benchmarks, indicating that the significance-aware reward successfully promotes deeper reasoning attempts when the model is uncertain. In contrast, removing the dynamic reward leads to consistently longer outputs throughout training, while our method gradually reduces response length over time. This trend confirms that dynamic reward scheduling effectively encourages concise reasoning as the model matures. Together, these findings highlight the complementary roles of the two reward components: one enhances exploration during failure, and the other improves efficiency during learning progression.

\section{Case Study}
\label{ap:case_study}

To further assess the reasoning quality and output efficiency of our method, we conduct two case studies comparing the Base model, PPO, and our \textsc{Bingo} framework across two model scales. As shown in Figures~\ref{fig:case-study-1.5b} and~\ref{fig:case-study-7b}, the Base model often produces excessively long and repetitive outputs, sometimes exceeding 5,000 words, and either fails to converge or hides the correct answer deep within verbose reasoning or terminated generations without a final answer. PPO improves conciseness but occasionally generates incorrect outputs or retains unnecessary repetition in intermediate steps. In contrast, \textsc{Bingo} consistently produces correct, well-structured solutions while significantly reducing output length—achieving up to 3x compression over PPO in the 7B setting. These examples demonstrate the effectiveness of our reward design in balancing reasoning depth and brevity, encouraging the model to generate focused and efficient reasoning even on complex tasks.

\begin{figure*}[ht]
    \centering
    \resizebox{0.8\textwidth}{!}{
    \fbox{\parbox[c]{1\textwidth}{
    
        \textbf{Problem: Find the greatest integer less than $(\sqrt{7} + \sqrt{5})^6.$  (Do not use a calculator!) } 

        \vskip 0.1in

        \xmark \, \textbf{Response of Base Model with 2509 words: } {\setlength{\fboxsep}{-1pt}
        
         \input{figures/case_study/1.5bbase}
        }

        \vskip 0.1in

        \xmark \, \textbf{Response of PPO with 403 words: } {\setlength{\fboxsep}{-1pt}
        
         \input{figures/case_study/1.5bppo}
        }

        \vskip 0.1in

        \cmark \,\textbf{Response of Bingo with 470 words: } {\setlength{\fboxsep}{-1pt}
        
         \input{figures/case_study/1.5bours}
        }

        \vskip 0.1in

        \textbf{Final Answer: } 13535.
    }}}%
    \caption{Case study under the DeepSeek-R1-Distill-Qwen-1.5B model with three settings: Base, PPO, and Bingo. Blue highlights some redundant and repetitive tokens, while red marks omitted content and the final answer.}

    \label{fig:case-study-1.5b}
\end{figure*}

\begin{figure*}[ht]
    \centering
    \resizebox{0.8\textwidth}{!}{
    \fbox{\parbox[c]{1\textwidth}{
    
        \textbf{Problem: In how many ways can 8 people sit around a round table if 3 of the people -- Pierre, Rosa, and Thomas -- all want to sit together?  (Two seatings are considered the same if one is a rotation of the other.) } 

        \vskip 0.1in

        \xmark \, \textbf{Response of Base Model with 5783 words: } {\setlength{\fboxsep}{-1pt}
        
         \input{figures/case_study/7bbase}
        }

        \vskip 0.1in

        \cmark \, \textbf{Response of PPO with 437 words: } {\setlength{\fboxsep}{-1pt}
        
         \input{figures/case_study/7bppo}
        }

        \vskip 0.1in

        \cmark \,\textbf{Response of Bingo with 155 words: } {\setlength{\fboxsep}{-1pt}
        
         \input{figures/case_study/7bours}
        }

        \vskip 0.1in

        \textbf{Final Answer: } 1.
    }}}
    \caption{
    Case study under the DeepSeek-R1-Distill-Qwen-7B model with three settings: Base, PPO, and Bingo. Blue highlights some redundant and repetitive tokens, while red marks omitted content and the final answer.
    }
    \label{fig:case-study-7b}
\end{figure*}

\section{Hyperparameter Study}
\label{ap:hyperparameter}



We evaluated several combinations of hyperparameters for \textsc{Bingo} on the GSM8K dataset using DeepSeek-R1-Distill-Qwen-1.5B as the base model. Table~\ref{tab:bingo-hparams} reports the accuracy and output length across different settings.

\begin{table}[h]
\centering
\small
\caption{Performance of \textsc{Bingo} under different hyperparameter settings on GSM8K.}
\label{tab:bingo-hparams}
\begin{tabular}{ccccccccc}
\toprule
$\lambda_c$ & $\lambda_w^{is}$ & $\lambda_w^{s}$ & $S$ & $\beta$ & $\alpha$ & $\tau$ & Acc. & Len. \\
\midrule
2 & 2 & 5 & 5 & 2   & 0.5 & 0.5 & 86.6 & 570 \\
2 & 2 & 5 & 10 & 5   & 0.2 & 0.8 & 86.7 & 585 \\
5 & 5 & 5 & 10  & 2.5 & 0.4 & 0.6 & 86.9 & 578 \\
\textbf{5} & \textbf{5} & \textbf{5} & \textbf{10} & \textbf{2.5} & \textbf{0.5} & \textbf{0.5} & \textbf{87.0} & \textbf{563} \\
\bottomrule
\end{tabular}
\end{table}

\noindent\textbf{Hyperparameter Definitions:}
\begin{itemize}[leftmargin=20pt, itemsep=0pt, labelsep=5pt, topsep=0pt]
    \item $\lambda_c$: Insignificant Length Reward Weight for Correct Samples.  
    \item $\lambda_w^{is}$: Insignificant Length Reward Weight for Incorrect Samples.  
    \item $\lambda_w^{s}$: Significant Length Reward Weight for Incorrect Samples.  
    \item $S$: Slope interval for the Dynamic Length Reward.  
    \item $\beta$: Threshold for Training Phase Transition.  
    \item $\alpha$: Decay Factor for Dynamic Length Reward.  
    \item $\tau$: Threshold for Significant Tokens.  
\end{itemize}

As shown in Table~\ref{tab:bingo-hparams}, the performance of \textsc{Bingo} remains stable, with both accuracy and output length exhibiting only minor fluctuations across the tested hyperparameter ranges. This indicates that the method is robust to hyperparameter choices. Since the last configuration achieves the best overall performance, we fixed these hyperparameters for methods and datasets to ensure consistency and fairness in comparison.


\section{Notation Table}
\label{ap:notation}

Table~\ref{tab:notation} offers a detailed overview of the notations utilized in this paper, along with their respective explanations. It serves as a handy reference to assist readers in grasping the concepts discussed in our work.

\begin{table*}[h]
\centering
\caption{Notation used throughout the paper}
\label{tab:notation}
\resizebox{0.8\textwidth}{!}{%
\begin{tabular}{cl}
\toprule
\textbf{Notation} & \textbf{Description} \\
\midrule
\multicolumn{2}{l}{\textit{General}} \\
$y$ & Sequence of tokens generated by the language model \\
$x$ & Input prompt for the language model \\
$n$ & Total length of the sequence $y$ \\
$y_i$ & $i$-th token in the generated sequence $y$ \\
$\hat{z}(y)$ & Extracted final answer from the generated sequence $y$ \\
$z$ & Ground-truth answer \\
$\mathbb{E}_{\pi_\theta}$ & Expectation over policy $\pi_\theta$ \\
$A(L)$ & Expected accuracy as a function of output length $L$ \\
$L$ & Length of the output sequence generated by the model \\
$L_{\max}$ & Maximum response length in the dataset \\
$\mathrm{Acc}$ & Exact match accuracy of the final output \\
$\text{L-Acc}$ & Length-normalized accuracy, defined as $\mathrm{Acc} \times \sqrt{1 - \frac{L}{L_{\max}}}$ \\
$S(y_i)$ & Significance score of token $y_i$ \\
$L^s$ & Number of significant tokens in the response \\
$L^{is}$ & Number of insignificant tokens in the response \\
$\tau$ & Threshold for classifying a token as significant or insignificant \\
\midrule
\multicolumn{2}{l}{\textit{Reinforcement Learning}} \\
$\pi_\theta$ & Policy parameterized by $\theta$ \\
$\hat{A}_t$ & Advantage estimate at time step $t$ \\
$r_t(\theta)$ & Importance sampling ratio for policy optimization \\
$R^\textsc{Bingo}$ & Reward function in the \textsc{Bingo} framework \\
$\mathcal{J}_{\textsc{Bingo}}(\theta)$ & PPO objective with \textsc{Bingo} reward function \\
$\epsilon$ & Clipping parameter in the PPO objective \\
\midrule
\multicolumn{2}{l}{\textit{Rewards and Penalties}} \\
$r_{is}(y)$ & Reward for insignificant tokens in sequence $y$ \\
$r_s(y)$ & Reward for significant tokens in sequence $y$ \\
$\lambda_c$ & Coefficient for penalty on correct responses \\
$\lambda_w$ & Coefficient for penalty on incorrect responses \\
$k$ & Dynamic scaling factor for length reward \\
$\alpha$ & Scaling factor for the decay in dynamic length reward \\
$\beta$ & Threshold for transition between exploration and compression in dynamic reward \\
\midrule
\multicolumn{2}{l}{\textit{Length Penalty}} \\
$L_{\text{ref}}^{is}$ & Reference number of insignificant tokens \\
$L_{\text{ref}}^s$ & Reference number of significant tokens \\
$k(t)$ & Dynamic scaling factor for adjusting length reward over time \\
\midrule
\multicolumn{2}{l}{\textit{Miscellaneous}} \\
$\mathcal{M}_e$ & Model used to estimate token significance (LLMLingua-2) \\
$\mathbbm{1}[\cdot]$ & Indicator function (1 if true, 0 otherwise) \\
$\mathcal{Y}_{\mathrm{sig}}$ & Set of significant tokens in the sequence $y$ \\
$\mathcal{Y}_{\mathrm{insig}}$ & Set of insignificant tokens in the sequence $y$ \\
\bottomrule
\end{tabular}%
}
\end{table*}

\end{document}